\newtheorem{theorem}{Theorem}[section]
\newtheorem{lemma}{Lemma}
\newtheorem{definition}{Definition}
\newtheorem{claim}{Claim}
\newtheorem{example}{Example}
\newcommand{\ie}[0]{\emph{i.e.},~}
\newcommand{\eg}[0]{\emph{e.g.},~}
\newcommand{\defeq}{\ensuremath{\doteq}}
\newcommand{\diag}{\operatorname{diag}}
\newcommand{\pool}{\operatorname{pool}}
\newcommand{\kw}[1]{{\small\textsc{\MakeLowercase{#1}}}}
\newcommand{\mat}[1]{\ensuremath{\MakeUppercase{\mathbf{#1}}}}
\newcommand{\vect}[1]{\ensuremath{\MakeLowercase{\mathbf{#1}}}}
\newcommand{\set}[1]{\ensuremath{\mathbb{#1}}}
\newcommand{\multiset}[1]{\ensuremath{\widetilde{\set{#1}}}}
\newcommand{\gr}[1]{\ensuremath{\mathcal{#1}}}
\newcommand{\prm}[1]{\ensuremath{^{#1}}}
\newcommand{\grn}[2]{\ensuremath{\gr{#1}\prm{#2}}}
\newcommand{\tuple}[1]{\ensuremath{\langle{#1} \rangle}}
\newcommand{\Real}{\mathds{R}}
\renewcommand{\Re}{\mathds{R}}
\newcommand{\Nat}{\mathds{N}} 
\newcommand{\eye}{\mat{I}}
\newcommand{\ones}{\vect{1}}
\newcommand{\G}{\mat{G}}
\newcommand{\W}{\mat{W}}
\newcommand{\B}{\mat{B}}
\newcommand{\V}{\mat{V}}
\newcommand{\g}{\gr{g}}
\renewcommand{\H}{\mat{H}}
\newcommand{\X}{\mat{X}}
\newcommand{\Y}{\mat{Y}}
\newcommand{\Z}{\mat{Z}}
\newcommand{\x}{\vect{x}}
\newcommand{\K}{\mat{K}}
\newcommand{\w}{\vect{w}}
\renewcommand{\b}{\vect{b}}
\newcommand{\RR}{\set{R}}
\newcommand{\RRR}{\mathfrak{R}}
\newcommand{\XX}{\set{X}}
\newcommand{\BB}{\set{B}}
\newcommand{\WW}{\set{W}}
\newcommand{\n}{\vect{n}}
\newcommand{\m}{\vect{m}}
\newcommand{\st}{\;|\;}
\renewcommand{\vec}[1]{\ensuremath{\operatorname{vec}({#1})}}
\newcommand{\unvec}[1]{\ensuremath{\operatorname{vec}^{-1}({#1})}}
\title{Equivariant Entity Relationship Networks}
\author{%
  Devon Graham \\
  University of British Columbia\\
  \texttt{drgraham@cs.ubc.ca} 
   \And
    Junhao Wang \\
  McGill University \& Mila\\
  \texttt{junhao.wang@mail.mcgill.ca} 
\And
  Siamak Ravanbakhsh \\
  McGill University \& Mila\\
  \texttt{siamak@cs.mcgill.ca} 
}
\begin{document}

\maketitle

\begin{abstract}
The relational model is a ubiquitous representation of big-data, in part due to its extensive use in databases. In this paper, we propose the Equivariant Entity-Relationship Network (EERN), which is a Multilayer Perceptron equivariant to the symmetry transformations of Entity-Relationship model. To this end, we identify the most expressive family of linear maps that are exactly equivariant to entity relationship symmetries, and further show that they subsume recently introduced equivariant maps for sets, exchangeable tensors, and graphs. The proposed feed-forward layer has linear complexity in the data and can be used for both inductive and transductive reasoning about relational databases, including database embedding, and the prediction of missing records. This provides a principled theoretical foundation for the application of deep learning to one of the most abundant forms of data. Empirically, EERN outperforms different variants of coupled matrix tensor factorization in both synthetic and real-data experiments.
\end{abstract}

\section{Introduction}
In the relational model of data, we have a set of \textit{entities}, and one or more \textit{instances} of each entity. 
These instances interact with each other through a set of fixed \textit{relations} between entities. 
A set of \textit{attributes} may be associated with each type of entity and relation.\footnote{An alternative terminology refers to instances and entities as entities and entity types.} 
This simple idea is widely used to represent data, often in the form of a relational database, across a variety of domains, from shopping records, social networking data, and health records, to heterogeneous data from astronomical surveys.  

In this paper we introduce provably maximal family of equivariant linear maps for this type of data. Such linear maps are then combined with a nonlinearity and stacked in order to build a deep Equivariant Entity Relationship Network (EERN), which is simply a constrained Multilayer Perceptron. To this end we first represent the relational data using a set of coupled sparse tensors. This is an alternative representation to the tabular form used in the relational database literature. In this representation, one could simultaneously permute instances of the same entity across all tensors using the same permutation, without affecting the content; \cref{fig:er} demonstrates this in our running example. These symmetries also identify the desirable equivariance conditions for the model-- \ie (only) such permutations of the input should result in the same permutation of the output of the layer. 

After deriving the closed form of equivariant linear maps, we use equivariant network for missing record prediction in both synthetic and real-world databases. Our baseline is the family of coupled tensor factorization methods that have a 
strict assumption on the data-generation process, since they assume each tensor is a product of factors. Our results suggest that even 
when this strict assumption holds, the equivariance assumption in a our parameterized deep model which has a learning component, could more effective, resulting in a better performance. 

\begin{figure*}[t]\centering
\includegraphics[width=\textwidth]{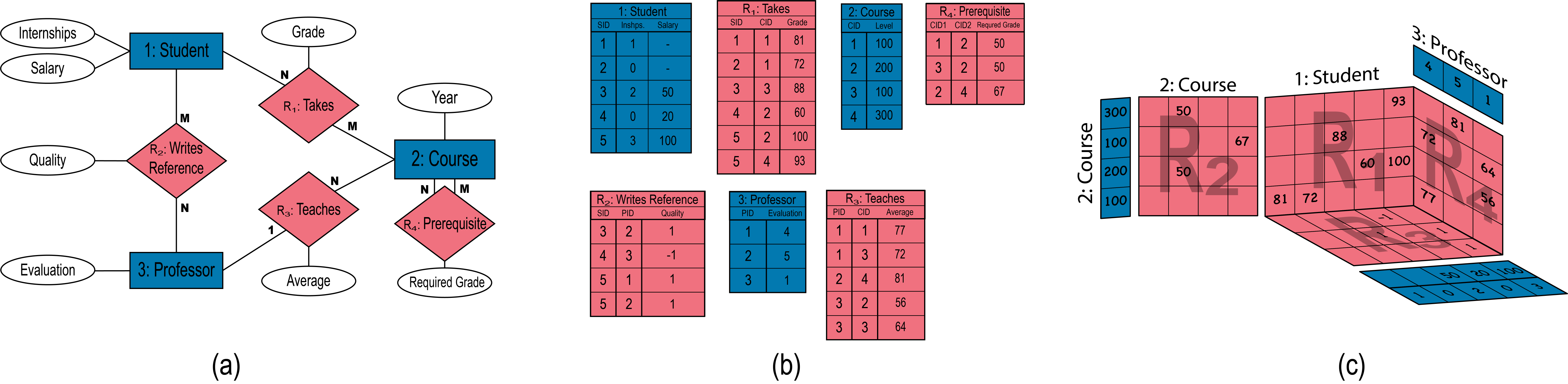}
\vspace{-2em}
\caption{{\footnotesize
(\textbf{a}) The Entity-Relationship (ER) diagram for our running example. There are three entities: \kw{Student}, \kw{Course} and \kw{Professor} (labeled 1,2 and 3 respectively), and four pairwise relations: \kw{Takes} (\kw{Student}-\kw{Course}, represented by $\RR_1 = \{1,2\}$); the self-self relation \kw{Prerequisite} (\kw{Course}-\kw{course}, represented by $\RR_2 = \{2,2\}$); \kw{Writes Reference} (\kw{Student}-\kw{Professor}, represented by $\RR_3 = \{1,3\}$); and \kw{Teaches} (\kw{Professor}-\kw{Course}, represented by $\RR_4 = \{2,3\}$). The full set of relations is $\RRR = \{\{1\}, \{2\}, \{3\}, \{1,2\} ,\{1,3\}, \{2,3\}, \{2, 2\} \}$. Both entities and relations have associated attributes  --- \eg when a \kw{Student} takes a \kw{Course}, they receive a \kw{Grade}. The singleton relation $\RR = {\{1\}}$ can encode \kw{Student} attribute(s) such as number of \kw{Internships}, or \kw{Salary} after graduating. Since each \kw{Course} has a single \kw{professor} as its teacher, this relation is one-to-many. (\textbf{b}) Some possible relational tables in an instantiation of the ER diagram of (a). There are $N_1=5$ instances of \kw{Student}, $N_2=4$ instances of \kw{Course} and $N_3=3$ instances of \kw{Professor}. The attributes associated with each entity and relation are stored in the corresponding table ---\eg table $\X^{\{1,2\}}$ suggests that the  \kw{Student} 5 took \kw{Course} 4 and received a \kw{Grade} of 93.
(\textbf{c}) Sparse tensor representation $\X^{{\RR}_1}, \ldots, \X^{\RR_{7}}$ of the tables of (b). The vectorized form of this set of sparse tensors, $\vec{\XX} = \tuple{\vec{\X^{\{1\}}};\ldots; \vec{\X^{\{2,2\}}}}$ ---the column-vector concatenation of $\vec{\X^{\RR}}$'s---  is the input to our feed-forward layer $\sigma(\W \vec{\XX})$. Here, the parameter-tying in the weight matrix $\W$ (\cref{fig:parameter_example}) guarantees that any permutation of elements of $\vec{\XX}$ corresponding to a shuffling of entities in this tensor representation (and only these permutations), results in the same permutation of the output of the layer.}
}
\label{fig:er}\label{fig:eg_tensors}\label{fig:tables}
\vspace{-1em}
\end{figure*}

\section{Related Literature}\label{sec:related}
Learning and inference on relational data has been the topic of research in machine learning over the past decades.
The relational model is closely related to first order and predicate logic, where the existence of a relation between instances becomes a truth statement about the world.
The same formalism is used in AI through a probabilistic approach to logic, where the field of \textit{statistical relational learning} has 
fused the relational model with the framework of \textit{probabilistic graphical models}~\cite{koller2009probabilistic}. 
Examples of such models include 
plate models, probabilistic relational models, Markov logic networks, and relational dependency networks~\citep{getoor2007introduction,raedt2016statistical}. Most relevant to our work in the statistical relational learning community is the relational neural network of~\citet{kazemi2017relnn}; see 
\ifbool{APX}{\cref{app:kazemi}}{Appendix A.1.2} for details.

An alternative to inference with symbolic representations of relational data is to use embeddings. In particular, \textit{tensor factorization} methods are extensively used for 
knowledge-graph embedding~\citep{nickel2016review}. 
Tensor factorization has been in turn extended to coupled matrix tensor factorization (CMTF)~\citep{yilmaz2011generalised} to handle multiple data sources jointly, by sharing latent factors across inputs when there is coupling of entities; see \ifbool{APX}{\cref{app:coupled}}{Appendix A.1.1} for more details. It is evident from \cref{fig:er}(c) that CMTF is directly comparable to our approach (in transductive setting), as it operates on the same data-structure. 

A closely related area that has enjoyed accelerated growth in recent years is \textit{relational and geometric deep learning}, where the term ``relational'' is used to denote the inductive bias introduced by 
a graph structure~\cite{battaglia2018relational}; see \ifbool{APX}{\cref{app:geometric}}{Appendix A.2} for more.
Although relational and graph-based terms are often used interchangeably in the machine learning community, they could refer to different data structures: graph-based models such as graph databases~\citep{robinson2013graph} and knowledge graphs 
simply represent data as an attributed (hyper-)graph, while the relational model, common in databases, uses the entity-relation (ER) diagram~\citep{chen1976entity} to constrain the relations of each instance (corresponding to a node), based on its entity-type; see \ifbool{APX}{\cref{app:stat-rel}}{Appendix A.1} for a discussion. 

Equivariant deep learning is another related area; see \ifbool{APX}{\cref{app:geometric,app:exchangeable}}{Appendices A.2 and A.3}.
Our model generalizes several equivariant layers proposed for structured domains: 
 it reduces to Deep-Sets \citep{zaheer_deepsets}
 when we have a single relation with a single entity; for example when we have only one blue table in \cref{fig:er}(b). 
\citet{hartford2018deep} consider a more general setting of interaction across different sets, such as user-tag-movie relations.
Our model reduce to theirs when we have a single relation involving multiple distinct entities -- \eg any of the pink tables / matrices in \cref{fig:er}, with the exception of $\RR_4$, which models the \kw{course-course} prerequisite interaction.
\citet{maron2018invariant} further allow repeated appearance of the same entity -- \eg in the \kw{node-node} relation of a graph, \kw{course-course} prerequisite relation.
 Our model reduces to this scheme when restricted to a single relation. For detailed discussion of these special cases see
 \ifbool{APX}{\cref{app:related}}{Appendix A}.

\section{Representations of Relational Data} \label{sec:representation}
We represent the relational model by a set of \textit{entities} $\mathcal{D} = \{1, ..., D \}$, and a set of \textit{relations} $\RRR \subseteq 2^{\mathcal{D}}$, indicating how the entities interact with one another.
For example, in \cref{fig:er}(a) we have $D=3$ entities, \kw{student}(1), \kw{course}(2), and \kw{professor}(3), where $\RRR = \{\{1\}, \{2\}, \{3\}, \{1,2\} ,\{1,3\}, \{2,3\}, \{2, 2\} \}$ is the set of relations.

For each entity $d \in \mathcal{D}$ we have a set $\{ 1, .., N_d \}$ of \textit{instances} of that entity; \eg $N_3 = 3$, is the number of \kw{professor} instances. For each relation $\RR \in \RRR$, where  $\RR = \{ d_1,\ldots, d_{|\RR|} \} $ is a set of entities, we observe data in the form of a set of tuples $\XX_\RR = \{\tuple{n_{d_1}, \ldots, n_{d_{|\RR|}}, x } \st n_{d_i}
\in [N_{d_i}], x \in \Real^K \}$. That is, each element of $\XX_\RR$ associates a feature vector $x$ with the relationship between the instances indexed by $n_{d_i}$ for each entity $d_i \in \RR$. 
 Note that the \textit{singleton relation} $\RR = \{d_i\}$ can be used for individual entity attributes (such as \kw{professors}' evaluations in \cref{fig:er}(a)). 
 For example, in \cref{fig:er}, we may have a tuple $\tuple{4,2,60} \in \XX_{\{1,2\}}$, which is identifying
 the grade of $60$, for \kw{student} $n_{1} = 4$ taking the \kw{course} $n_2 = 2$. The set of tuples $\XX_{\{1,2\}}$ therefore maintains the data associated with the \kw{student-course} relation $\RR_1 = \{1,2\}$.

In the most general case, we allow for both $\RRR$, and any $\RR \in \RRR$ to be multisets (\ie to contain duplicate entries). $\RRR$ is a multiset if we have multiple relations between the same set of entities. For example, we may have a \kw{supervises} relation between \kw{students} and \kw{professors}, in addition to the \kw{Writes Reference} relation. A particular relation $\RR$ is a multiset if it contains multiple copies of the same entity. Such relations are ubiquitous in the real world, describing for example, the connection graph of a social network, the sale/purchase relationships between a group of companies, or, in our running example, the \kw{Course}-\kw{Course} relation capturing prerequisite information. 
For our derivations we make the simplifying assumption that each \textit{attribute} $x \in \Re$ is a scalar. Extension to $\x \in \Re^{K}$ using multiple channels is simple and discussed in 
\ifbool{APX}{\cref{sec:multi_channels}}{Appendix F}.
Another common feature of relational data, the one-to-many relation, is addressed in 
\ifbool{APX}{\cref{sec:simplifications}}{Appendix E}.

\subsection{Tuples, Tables and Tensors}
In relational databases the set of tuples $\XX^{\RR}$ is often represented using a table, with one row for each tuple; see \cref{fig:tables}(b).
An equivalent representation for $\XX^{\RR}$ is using a ``sparse'' rank $|\RR|$ tensor $\X^\RR \in \Re^{{N_{d_1} \times \ldots \times N_{d_{|\RR|}}}}$, where each dimension of this tensor corresponds to an entity $d \in \RR$, and 
the the length of that dimension is the number of instances $N_{d}$. In other words
$$\tuple{n_{d_1}, \ldots, n_{d_{|\RR|}}, x} \in {\XX}^\RR \quad \Leftrightarrow \quad \X^{\RR}_{n_{d_1}, \ldots, n_{d_{|\RR|}}} = x.$$
We work with this tensor representation of relational data.
We use $\XX = \{ \X^\RR \st \RR \in \RRR\}$ to denote the set of all sparse tensors that define the relational data(base); see \cref{fig:eg_tensors}(c).
For the following discussions around exchangeability and equivariance, we assume that for all $\RR$, $\X^\RR$ are fully observed, dense tensors. Subsequently, we will discard this assumption and predict the missing records.
Note that relations $\RR$ can be multisets. For simplicity, we handle this in the main text by considering equal elements as distinct through indexing (\eg $d^{(i)}=d^{(j)}$), while leaving a formal definition of multisets for the supplementary material; see 
\ifbool{APX}{\cref{sec:multisets}}{Appendix B}. For example, the \kw{course-course} \kw{prerequisite} relation $\RR_{2,2} = \{2^{(1)}, 2^{(2)}\}$ in \cref{fig:er} is a multi-set.

\section{Symmetries of Relational Data}\label{sec:exchageabilities}
Recall that in the representation $\XX$, each entity $d \in \{1,...,D\}$ has a set of \emph{instances} indexed by $n_d \in \{1,..,N_d\}$. The ordering of $\{1,...,N_d\}$ is arbitrary, and we can shuffle these instances, affecting only the representation, and not the ``content'' of the relational data.
However, in order to maintain consistency across data tables,  we also have to shuffle all the tensors $\X^\RR$, where $d \in \RR$, using the same permutation applied to the tensor dimension corresponding to $d$; for example, in \cref{fig:er}(c), by permuting the order of five \kw{students} in the \kw{course-student} matrix, we also have to permute them in \kw{student-professor} matrix, and \kw{student} matrix (in blue). At a high level, this simple indifference to shuffling defines the exchangeabilities or symmetries of relational data. A mathematical group formalizes this idea. 

A mathematical \textit{group} is a set equipped with a binary operation, such that the set and the operation satisfy closure, associativity, invertability and existence of a unique identity element.  
$\grn{S}{M}$ refers to the \textit{symmetric group}, the group of all permutations of $M$ objects. A natural \textit{representation} for a member of this group $\g\prm{M} \in \grn{S}{M}$, is a permutation matrix $\G \in \{0,1\}^{M \times M}$. Here, the binary group operation is the same as the product of permutation matrices. 
In this notation, $\grn{S}{N_d}$ is the group of all permutations of instances $1,\ldots N_d$ of entity $d \in \{1,..,D\}$. To consider permutations to multiple dimensions of a data tensor we can use the direct product of groups.
Given two groups $\gr{G}$ and $\gr{H}$, the direct product $\gr{G} \times \gr{H}$ is defined by
\begin{align}
(\g, \gr{h}) \in \gr{G} \times \gr{H}  \Leftrightarrow \g \in \gr{G}, \gr{h} \in \gr{H} \quad \text{and} \quad
(\g, \gr{h}) \circ (\g', \gr{h}') = (\g \circ \g', \gr{h} \circ \gr{h}').
\end{align}
That is, the underlying set is the Cartesian product of the underlying sets of $\gr{G}$ and $\gr{H}$, and the group operation is element-wise.

Observe that we can associate the group $\grn{S}{N_1} \times \ldots \times \grn{S}{N_D}$ with the $D$ entities in a relational model, where each entity $d$ has $N_d$ instances. Intuitively, applying permutations from this group to the corresponding relational data should not affect the underlying content, while applying permutations from outside this group should. To see this, consider the tensor representation of \cref{fig:eg_tensors}(c): permuting students, courses or professors shuffles rows or columns of ${\XX}$, but preserves its underlying content. However, arbitrary shuffling of the elements of this tensor could alter its content. 
Our goal is to define a feed-forward layer that is ``aware'' of this structure. For this, we first need to formalize the \textit{action} of $\grn{S}{N_1} \times \ldots \times \grn{S}{N_D}$ on the ``vectorized''  $\XX$.

\paragraph{Vectorization.}
For each tensor $\X^\RR \in \XX$, $N_\RR = \prod_{d \in \RR} N_d$ refers to the total \textit{number of elements} of tensor $\X^\RR$ (note that for now we are assuming that the tensors are dense). We will refer to $N = \sum_{\RR \in \RRR} N_\RR$ as the number of elements of $\XX$.
Then $\vec{\X^\RR} \in \Real^{N_\RR}$ refers to the \textit{vectorization} of $\X^\RR$, obtained by successively stacking its elements along its dimensions, where the order of dimensions is given by $d \in \RR$. We use $\unvec{\cdot}$ to refer to the inverse operation of $\vec{\cdot}$, so that $\unvec{\vec{\X}} = \X$. With a slight abuse of notation, we use $\vec{\XX} \in \Real^{N}$ to refer to $[\vec{\X^{\RR_1}}; \ldots; \vec{\X^{\RR_{|\RRR|}}}]$, the vectorized form of the entire relational data. 
The weight matrix $\W$ that we define later creates a feed-forward layer $\sigma(\W \vec{\XX})$ applied to this vector.

\paragraph{Group Action.}
The \textit{action} of $\g \in \grn{S}{N_1} \times \ldots \times \grn{S}{N_D}$ on $\vec{\XX} \in \Real^N$, permutes the elements of $\vec{\XX}$. Our objective is to define this group action by mapping $\grn{S}{N_1} \times \ldots \times \grn{S}{N_D}$ to a group of permutations of all $N = \sum_{\RR \in \RRR} \prod_{d \in \RR} N_d$ entries of the database -- \ie a homomorphism into $\grn{S}{N}$.
To this end we need to use two types of matrix product.

Let $\G \in \Real^{N_1 \times N_2}$ and $\H \in \Real^{N_3 \times N_4}$ be two matrices. The \textit{direct sum} $\G \oplus \H$ is an $(N_1 + N_3) \times (N_2 + N_4)$ block-diagonal matrix, and the \textit{Kronecker product} $\G \otimes \H$ is an $(N_1 N_3) \times (N_2 N_4)$ matrix:
\begin{align*}
 \G \oplus \H \defeq   
 \begin{pmatrix*}
 \G & \mat{0} \\
 \mat{0} & \H 
 \end{pmatrix*}, \quad \G \otimes \H \defeq 
\begin{pmatrix*} 
\G_{1,1} \H & \hdots & \G_{1,N_2} \H \\
\vdots & \ddots & \vdots \\
\G_{N_1,1} \H & \hdots & \G_{N_1,N_2} \H
\end{pmatrix*}.
 \end{align*}
When both $\G$ and $\H$ 
are permutation matrices, $\G \oplus \H$ and $\G \otimes \H$ will also be permutation matrices.   Both of these matrix operations can represent the direct product of permutation groups. That is, given two permutation matrices $\G\prm{1}\in \grn{S}{N_1}$, and $\G\prm{2} \in \grn{S}{N_2}$, we can use both $\G\prm{1} \otimes \G\prm{2}$ and $\G\prm{1} \oplus \G\prm{2}$ to represent members of $\grn{S}{N_1} \times \grn{S}{N_2}$. However, the resulting permutation matrices, can be interpreted as different \textit{actions}: while the $(N_1 + N_2) \times (N_1 + N_2)$ direct sum matrix $\G\prm{1} \oplus \G\prm{2}$ is a permutation of $N_1 \mathbf{+} N_2$ objects,
the $(N_1 N_2) \times (N_1 N_2)$ Kronecker product matrix $\G\prm{1} \otimes \G\prm{2}$ is a permutation of $N_1 N_2$ objects.

\begin{mdframed}[style=MyFrame2]
\begin{claim}\label{claim:perm}
Consider the vectorized relational data $\vec{\XX}$ of length $N$. 
The action of $\grn{S}{N_1} \times \ldots \times \grn{S}{N_D}$ on $\vec{\XX}$ is given by the following permutation group
\begin{align}\label{eqn:perm}
\grn{G}{\XX} \defeq \big\{ \bigoplus_{\RR \in \RRR} \bigotimes_{d \in \RR} \G^d  \st (\G^{1},\ldots,\G^{D}) \in 
\grn{S}{N_1} \times \ldots \times \grn{S}{N_D} \big\}.
\end{align}
where the order of relations in $\bigoplus_{\RR \in \RRR}$ is consistent with the ordering used for vectorization of \XX.
\end{claim}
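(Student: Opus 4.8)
The plan is to establish the claim in three stages---the action on a single tensor, then on its vectorization, and finally the assembly across all relations---and to close by checking that the assignment $(\G^1,\ldots,\G^D) \mapsto \bigoplus_{\RR\in\RRR}\bigotimes_{d\in\RR}\G^d$ is a group homomorphism, so that $\grn{G}{\XX}$ is a faithful permutation representation of the action rather than an arbitrary collection of matrices.

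First I would fix a tuple $(\g\prm{1},\ldots,\g\prm{D}) \in \grn{S}{N_1}\times\ldots\times\grn{S}{N_D}$ with corresponding permutation matrices $\G^1,\ldots,\G^D$, and write out its action on one tensor $\X^\RR$. Relabelling the instances of each entity $d$ permutes the dimension of $\X^\RR$ indexed by $d$, so the transformed tensor $\Y^\RR$ has entries $\Y^\RR_{n_{d_1},\ldots,n_{d_{|\RR|}}} = \X^\RR_{\g\prm{d_1}(n_{d_1}),\ldots,\g\prm{d_{|\RR|}}(n_{d_{|\RR|}})}$ (up to the usual inverse convention). When $\RR$ is a multiset, the indexing convention of \cref{sec:representation} treats repeated copies $d^{(i)}$ as distinct, so the same matrix $\G^d$ is simply applied to each dimension in which $d$ occurs; this is what keeps the bookkeeping consistent in the next step.

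The crux is to show that vectorizing this multilinear action turns it into multiplication by $\bigotimes_{d\in\RR}\G^d$. I would argue element-wise rather than invoking the matrix identity $\vec{\G\X\H^\top} = (\H\otimes\G)\vec{\X}$ alone, since the latter covers only rank-two tensors. The vectorization of \cref{sec:representation} sends a multi-index $(n_{d_1},\ldots,n_{d_{|\RR|}})$ to a single linear index in the order dictated by $d\in\RR$, which is exactly the order in which the Kronecker product flattens its row and column indices; hence $\big(\bigotimes_{d\in\RR}\G^d\big)_{\mathbf{m},\mathbf{n}} = \prod_{i}(\G^{d_i})_{m_{d_i},n_{d_i}}$. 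Since each $\G^{d_i}$ is a $0/1$ permutation matrix, this product equals $1$ precisely when $\mathbf{n}$ is the $(\g\prm{d_1},\ldots,\g\prm{d_{|\RR|}})$-preimage of $\mathbf{m}$ and $0$ otherwise, so applying $\bigotimes_{d\in\RR}\G^d$ to $\vec{\X^\RR}$ reproduces exactly the entry permutation defining $\vec{\Y^\RR}$. The one delicate point is keeping the dimension ordering in $\bigotimes_{d\in\RR}$ aligned with the stacking order of $\vec{\cdot}$, which is precisely the hypothesis already flagged in the claim statement.

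Finally, because $\vec{\XX}$ is the concatenation $[\vec{\X^{\RR_1}};\ldots;\vec{\X^{\RR_{|\RRR|}}}]$ and each block $\vec{\X^\RR}$ is transformed independently by its own $\bigotimes_{d\in\RR}\G^d$, the combined action is block-diagonal, i.e.\ the direct sum $\bigoplus_{\RR\in\RRR}\bigotimes_{d\in\RR}\G^d$ with blocks ordered as in the vectorization. To confirm this is a genuine group action, I would verify the homomorphism property: the identity tuple maps to the identity matrix, and using the mixed-product rules $(\mat A\otimes\mat B)(\mat C\otimes\mat D) = (\mat A\mat C)\otimes(\mat B\mat D)$ together with the analogous block-diagonal rule for $\oplus$, the product of two such matrices equals the one induced by the composed tuple $(\g\prm{1}\circ\g'\prm{1},\ldots,\g\prm{D}\circ\g'\prm{D})$. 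Combined with the already-cited fact that $\otimes$ and $\oplus$ of permutation matrices are permutation matrices, this shows $\grn{G}{\XX}$ is the image of $\grn{S}{N_1}\times\ldots\times\grn{S}{N_D}$ under a homomorphism into $\grn{S}{N}$, completing the proof. I expect the index-flattening correspondence in the third stage to be the main obstacle, with the multiset ordering being the only place a careless argument could go wrong.
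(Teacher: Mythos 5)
Your proof is correct and follows essentially the same route as the paper's: the Kronecker product $\bigotimes_{d \in \RR} \G^d$ realizes the per-axis permutation of each $\X^\RR$ after vectorization, the direct sum assembles these blocks in the vectorization order, and using a single $\G^d$ for every relation (and every repeated occurrence) containing $d$ enforces consistency. The paper's own proof is a terse sketch of exactly this argument; your element-wise Kronecker index verification and the closing mixed-product homomorphism check merely fill in details it leaves implicit.
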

\vspace{-1em}
\end{mdframed}
\begin{proof}
The Kronecker product $\bigotimes_{d \in \RR} \G^d$ when applied to $\vec{\X^\RR}$, permutes the underlying tensor $\X^\RR$ along the axes $d \in \RR$.
Using direct sum, these permutations are applied to each tensor $\vec{\X^\RR}$ in $\vec{\XX} = [\vec{\X^{\RR_1}}; \ldots, \vec{\X^{\RR_D}}]$. The only constraint, enforced by \cref{eqn:perm} is to use the same permutation matrix $\G^d$ for all $\RR$ when $d \in \RR$.
Therefore any matrix-vector product $\G\prm{N} \vec{\XX}$ is a ``legal'' permutation of $\vec{\XX}$, since it only shuffles the instances of each entity.
\end{proof}

\section{Equivariant Linear Maps for Relational Data}\label{sec:layer}
Our objective is to constrain the standard feed-forward layer $f: \Real^{K \times N} \to \Real^{K' \times N}$ ---where $K,K'$ are the number of input and output channels, and $N = |\vec{\XX}|$--- such that
any ``legal'' transformation of the input, as defined in \cref{eqn:perm}, should result in the same transformation of the output. This is a useful bias because seeing one datapoint is equivalent to seeing all its legal transformations in an equivariant model -- therefore, the effect is similar to that of data-augmentation with exponentially many permutations of dataset. Using the equivariant map not only provides this exponential reduction in computation compared to data-augmentation, but also leads to a feed-forward layer with  $\mathcal{O}(N)$ implementation, in contrast with $\mathcal{O}(N^2)$ cost in a fully-connected layer.
For clarity, we limit the following definition to the case where $K=K' = 1$; see
\ifbool{APX}{\cref{sec:multi_channels}}{Appendix F} 
for the case of multiple channels.

\begin{mdframed}[style=MyFrame2]
\begin{definition}[Equivariant Entity-Relationship Layer; EERL]\label{def:layer}
Let $\G\prm{\XX}$ be any $N \times N$ permutation of $\vec{\XX}$. A fully connected layer $\sigma( \W \vec{\XX})$ with $\W \in \Real^{N \times N}$ is called an \textit{Equivariant Entity-Relationship Layer} if
\begin{align}\label{eqn:erl}
\sigma( \W \G\prm{\XX} \vec{\XX}) = \G\prm{\XX} \sigma( \W \vec{\XX}) \quad \forall \XX \quad  \Leftrightarrow \quad \G\prm{\XX} \in \grn{G}{\XX} .
\end{align}
That is, an EERL is a layer that 
commutes with the permutation $\G\prm{\XX}$ if and only if $\G\prm{\XX}$ is a legal permutation, as defined by \cref{eqn:perm}.
\end{definition}
\vspace{-1em}
\end{mdframed}
In its most general form, we can also allow for particular kinds of \textit{bias} parameters in \cref{def:layer}; see 
\ifbool{APX}{\cref{sec:bias}}{Appendix D}
for parameter-tying in the bias term. Since group $\gr{G}\prm{\XX}$ is finite, equivariance condition above 
can be satisfied using parameter-sharing~\cite{shawe1993symmetries,ravanbakhsh_symmetry}.
Next identify the closed form of parameter-sharing in $\W$ so as to guarantee EERL. 

Going back to the collection of tensors $\XX$ under legal permutations $\grn{G}{\XX}$, note that a legal permutation never moves an element of one tensor to another tensor, therefore each tensor $\X^{\RR_i}$ is an \emph{invariant} subset of the data (\ie a collection of orbits). One could decompose the equivariant map into linear maps between different invariant subsets.
This means that the matrix $\W \in \Real^{N \times N}$ has independent blocks $\W^{i,j} \in \Real^{N_{\RR_i} \times N_{\RR_j}}$ corresponding to each pair of relations $\RR_i, \RR_j$:
\begin{align}\label{eqn:param_blocks}
\W = 
\begin{pmatrix*}[l]
    \W^{1,1} & \W^{1,2} & \dots  & \W^{1,{|\RRR|}} \\
    \vdots & \vdots & \ddots & \vdots \\
    {\W^{{|\RRR|}, 1}} & {\W^{{|\RRR|}, 2}} & \dots  & \W^{{|\RRR|,|\RRR|}}
\end{pmatrix*}.
\end{align}
Moreover, if two relations have no common entity the only equivariant map between them is a constant map.
Our objective moving forward is to identify the form of each block that expresses the effect of observations in one relation (corresponding to tensors $\X^{\RR_i}$) on another. This form is rather involved, and we need 
to introduce an indexing notation before expressing it in closed form.

\paragraph{Indexing Notation.} 
The parameter block $\W^{i, j}$ is an $N_{\RR_{i}} \times N_{\RR_{j}}$ matrix, where $N_{\RR_{i}} = \prod_{d \in \RR_{i}} N_d$. We want to index rows and columns of $\W^{i,j}$.
Given the relation $\set{R}_i = \{d_1,\ldots, d_{|\set{R}_{i}|}\}$, we use the tuple $\n^{i} \defeq \tuple{n^i_{d_1}, \ldots, n^i_{d_{|\set{R}_i|}}} \in [N_{d_1}] \times \ldots \times [N_{d_{|\set{R}_i|}}]$ 
to index an element in the set $[N_{\RR_i}]$. Since each element of $\n^i$ indexes instances of a particular entity, $\n^{i}$
can be used as an index both for data block $\vec{\X^{\RR_i}}$ and for the rows of parameter block $\W^{i, j}$. In particular, to denote an entry 
of $\W^{i, j}$, we use $\W^{i,j}_{\n^{i}, \n^{j}}$. 
Moreover, we use $n^{i}_{d}$ to denote the element of $\n^{i}$ corresponding to entity $d \in \RR_i$. Note that
this is not necessarily the $d^{th}$ element of the tuple $\n^{i}$. For example, if $\RR_i = \{1,4,5\}$ and $\n^{i} = \tuple{400,12,3}$, then $n^{i}_{4} = 12$ and $n^{i}_{5} = 3$. When $\RR_i$ is a multiset, we can use $n^i_{d^{(k)}}$ to refer the to the element of $\n^i$ corresponding to the $k$-th occurrence of entity $d$ (where the order corresponds to the ordering of elements in $\n^i$). 
\ifbool{APX}{\cref{table:notation}}{Table 3}
in the Appendix summarizes our notation.

\begin{figure*}[t]\centering
\includegraphics[width=1\textwidth]{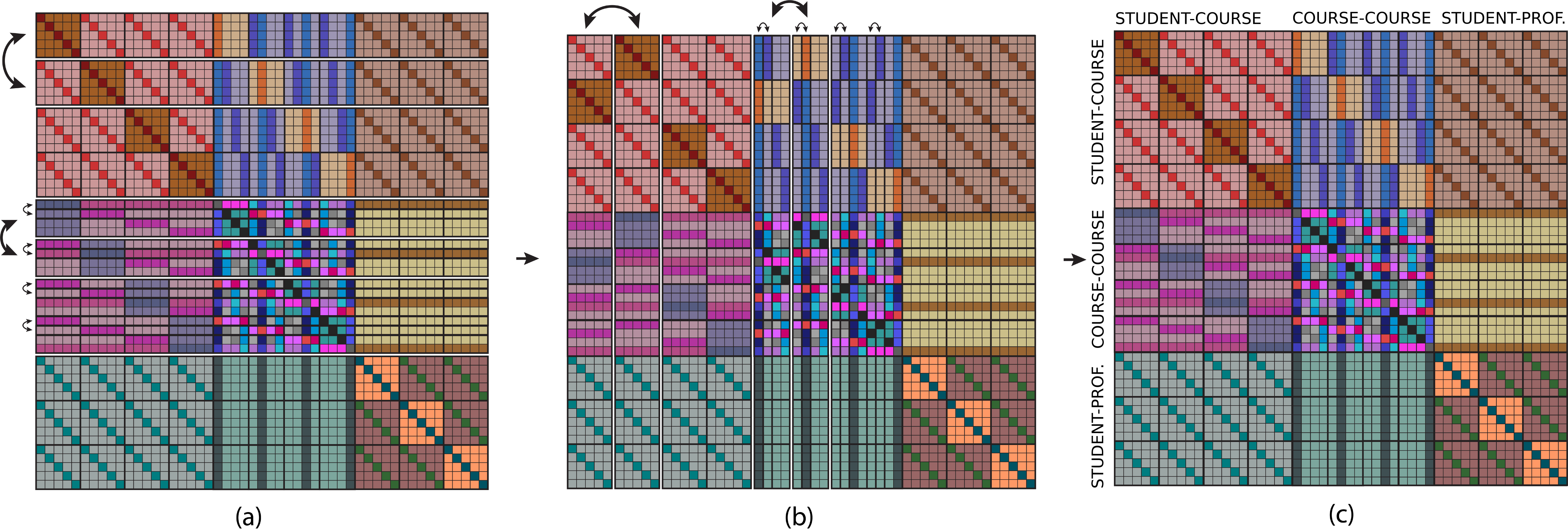}
\caption{{\footnotesize (\textbf{a}) . Each colour represents a unique parameter value. The nine blocks, showing the interaction of three relations \kw{student-course, course-course} and \kw{student-prof}, are clearly visible. The parameter-matrix commutes with valid permutations $\W \G\prm{\XX} = \G\prm{\XX} \W$, which means $\W = \G\prm{\XX} \W {\G\prm{\XX}}^\top$, that is $\W$ is invariant under simultaneous ``legal''
permutation of rows and columns.
The arrows indicate one such permutation that is being applied. (\textbf{b}) The result of applying a permutation to the rows of $\W$. 
The specific permutation only permutes 4 instances of \kw{course} by
swapping the first and second course. You can see that this permutation only
affects the first two blocks, where relations involve the \kw{course} entity.
(\textbf{c}) The result of applying the same permutation to the columns of $\W$. By doing so, as expected we recover the original matrix. This condition in $\W$ holds only for ``legal'' permutations as defined in \cref{eqn:perm}. 
This parameter matrix is based on Example 2 in the Appendix.
}}

\vspace{-1em}
\label{fig:parameter_example}
\end{figure*}

\subsection{Parameter Tying}\label{sec:parameter_tying}
Let  $\W^{i,j}_{\n^{i}, \n^{j}}$ and $\W^{i,j}_{{\m^{i}}, {\m^{j}}}$ denote two arbitrary elements of the 
parameter matrix $\W^{i,j}$. Our objective is to decide whether or not they should be tied together to ensure the resulting layer is an EERL (\cref{def:layer}). 
For this, we define an equivalence relation between the indices: $\tuple{\n^{i}, \n^{j}} \thicksim \tuple{\m^{i}, \m^{j}}$, and tie together all entries of $\W$ that are equivalent under this relation. Index $\tuple{\n^{i}, \n^{j}}$ is equivalent to index $\tuple{\m^{i}, \m^{j}}$ if they have the same ``equality patterns'' over their indices for each unique entity $d$.
We consider $\n^{i,j}$, the concatenation of $\n^{i}$ with $\n^{j}$, and examine the sub-tuples $\n_d^{i,j}$, where $\n^{i,j}$ is restricted to only those indices that index entity $d$. We do this so that we can ask if these indices refer to the same instance (\eg the same student), and we can only meaningfully compare indices of the same entity (\eg two entries in the \kw{student} table may refer to the same student, but an entry in the \kw{student} table cannot refer to the same thing as an entry in the \kw{course} table. Also, recall that we allow $\RR_i$ and $\RR_j$ to be multisets (\eg the \kw{course}-\kw{course} relation $\RR_4$ in \cref{fig:er}). 
We say two tuples $\n_d^{i,j} = \tuple{n_{d^{(1)}}, \ldots n_{d^{(L)}} }$ and $\m_d^{i,j} = \tuple{m_{d^{(1)}}, \ldots m_{d^{(L)}}}$ are equivalent iff the have the same equality pattern 
$n_{d^{(l)}} = n_{d^{(l')}} \Leftrightarrow m_{d^{(l)}} = m_{d^{(l')}} \forall l \in [L]$.
Accordingly, two index tuples
$\n^{i,j}$ and $\m^{i,j}$ are equivalent iff they are equivalent for all entities $d \in \RR_i \cup \RR_j$. 

Because of this tying scheme, the total number of \textit{free} parameters in $\W^{i,j}$ is the product of the number of possible different partitionings of $\n^{i,j}_d$ for each unique entity $d \in \RR_i \cup  \RR_j$, and so is a product of Bell numbers\footnote{The $k$-th Bell number counts the number of ways of partitioning a set of size $k$.},  
which count the number of partitions of a set given size; see 
\ifbool{APX}{\cref{sec:num_params}}{Appendix C}
for details.
This relation to Bell numbers was previously shown for equivariant graph networks~\citep{maron2018invariant} which, as discussed in \ifbool{APX}{\cref{app:maron}}{Appendix A.3.3}, are closely related to, and indeed a special case of, our model. 
This parameter-sharing scheme admits a simple recursive form, if the database has no self-relations (\ie the $\RR_i$ are not multisets); see 
\ifbool{APX}{\cref{sec:simplifications}}{Appendix E}.
In practice, we approach this constrained layer in a much more efficient way: the operations of the layer can be performed efficiently using pooling and broadcasting over tensors; see 
\ifbool{APX}{\cref{sec:simplifications}}{Appendix E}.

\begin{example}\label{eg:minimal_1_app}[\cref{fig:parameter_example}(a)]
To get an intuition for this tying scheme, consider a simplified version of the relational structure of \cref{fig:er}, restricted to the three relations $\RR_1 = \{1,2\}$, self-relation $\RR_2 = \{2,2\}$, and $R_3 = \{1,3\}$ with $N_1 = 5$ \kw{students}, $N_2 = 4$ \kw{courses}, and $N_3=3$ \kw{professors}. Then $N = 5 \times 4 + 4 \times 4 + 5 \times 3 = 51$, so $\W \in \Re^{51 \times 51}$ and will have nine blocks: $\W^{1,1} \in \Re^{20 \times 20}, \W^{1,2} \in \Re^{20 \times 16}. \W^{2,2} \in \Re^{16 \times 16}$ and so on.
We use tuple $\n^{1} = \tuple{n^1_1, n^1_2} \in [N_1] \times [N_2] = [5] \times [4]$ to index the rows and columns of $\W^{1,1}$. We also use $\n^{1}$ to index the rows of $\W^{1,2}$, and use $\n^{2} = \tuple{n^2_{2^{(1)}}, n^2_{2^{(2)}}} \in [N_2] \times [N_2] = [4] \times [4]$ to index its columns. Other blocks are indexed similarly. Suppose $\n^1 = \tuple{1, 4}, \n^2 = \tuple{4, 5}, \m^1 = \tuple{2, 3}$ and $\m^2 = \tuple{3, 2}$ and we are trying to determine whether $\W^{1,2}_{\n^1, \n^2}$ and $\W^{1,2}_{\m^1, \m^2}$ should be tied. Then $\n^{1,2} = \tuple{1,4,4,5}$ and $\m^{1,2} = \tuple{2,3,3,2}$. When we compare the sub-tuples restricted to unique entities, we see that the equality pattern of $\n^{1,2}_1 = \tuple{1}$ matches that of $\m^{1,2}_1 = \tuple{2}$ (since it is only a singleton, it matches trivially), and the equality pattern of $\n^{1,2}_2 = \tuple{4,4,5}$ matches that of $\m^{1,2}_2 = \tuple{3, 3, 2}$. So these weights should be tied. 
\end{example}

We now establish the optimality of our constrained linear layer for relational data.
\vspace{-.3em}
\begin{mdframed}[style=MyFrame2]
\begin{theorem}\label{thm:erl}
Let $\XX = \{ \X^\RR \st \RR \in \RRR\}$ be the tensor representation of some relational data and \vec{\XX} its vectorized form. $\sigma(\W\vec{\XX})$ is an Equivariant Entity-Relationship Layer (\cref{def:layer}) \emph{if and only if}
$\W$ is defined according to \cref{eqn:param_blocks}, with blocks that satisfy the tying scheme.
\end{theorem}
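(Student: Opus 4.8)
The plan is to first strip away the nonlinearity and reduce the claim to a statement about the linear map $\W$. Since $\sigma$ is applied entrywise, it commutes with any coordinate permutation, so $\G\prm{\XX}\sigma(\W\vec{\XX}) = \sigma(\G\prm{\XX}\W\vec{\XX})$; requiring the layer to commute with $\G\prm{\XX}$ for \emph{all} inputs (with $\sigma$ injective, or at least generic, so that equality of the post-activations forces equality of the pre-activations) is therefore equivalent to the matrix identity $\W\G\prm{\XX} = \G\prm{\XX}\W$. Hence \cref{thm:erl} reduces to showing that the commutant $\{\G\prm{\XX} : \W\G\prm{\XX} = \G\prm{\XX}\W\}$ equals $\grn{G}{\XX}$ \emph{exactly} if and only if $\W$ has the block form of \cref{eqn:param_blocks} with entries tied by the equality-pattern relation.

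The combinatorial heart is a lemma identifying the tying relation with the orbits of $\grn{G}{\XX}$ on pairs of coordinates. Writing the action on an entry index as $\tuple{\n^{i},\n^{j}} \mapsto \tuple{\G\prm{\XX}\cdot\n^{i},\, \G\prm{\XX}\cdot\n^{j}}$, I would show that $\tuple{\n^{i},\n^{j}}$ and $\tuple{\m^{i},\m^{j}}$ lie in the same orbit if and only if they are equivalent under the equality-pattern relation of \cref{sec:parameter_tying}. The forward direction is immediate: a legal permutation applies a single bijection $\G^{d}$ to every occurrence of entity $d$, and a bijection both preserves and reflects equalities, so it cannot alter the equality pattern of $\n^{i,j}_{d}$. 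For the converse, given matching patterns one constructs, for each entity $d$, a bijection $\G^{d}\in\grn{S}{N_{d}}$ realizing the required matching on the instance-values actually appearing in $\n^{i,j}_{d}$ and extended arbitrarily elsewhere; consistency of these per-entity bijections across occurrences is exactly what the equality-pattern condition guarantees. Since a legal permutation never moves a coordinate out of its tensor, every orbit sits inside one block $\W^{i,j}$, consistent with the decomposition \cref{eqn:param_blocks}.

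With this lemma, the ``equivariant'' half follows from the standard parameter-sharing characterization \citep{shawe1993symmetries,ravanbakhsh_symmetry}: a linear map is equivariant to a finite permutation group iff it is constant on that group's orbits on entry indices. Constancy on the orbits is, by the lemma, precisely the tying scheme, so $\W\G\prm{\XX}=\G\prm{\XX}\W$ holds for every $\G\prm{\XX}\in\grn{G}{\XX}$ iff $\W$ obeys the tying scheme. This yields the ``only if'' direction of the theorem (an EERL commutes with all legal permutations, hence must be tied) together with the $\Leftarrow$ half of the EERL biconditional.

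The main obstacle is the \emph{maximality} half --- proving that a tied $\W$ commutes with \emph{no} illegal permutation, so that its symmetry group is exactly $\grn{G}{\XX}$ and not larger. This requires the free parameters to be in general position (distinct values on distinct tying classes), a genericity hypothesis I would state explicitly. Under it, any $\G\prm{\XX}$ commuting with $\W$ satisfies $\W = \G\prm{\XX}\,\W\,{\G\prm{\XX}}^{\top}$, so $\G\prm{\XX}$ must map every tying class to itself, and I recover its structure in three steps. First, the diagonal class of each block $\W^{i,i}$ carries a parameter shared by no other block (blocks are parameter-independent, even when two relations coincide), forcing $\G\prm{\XX}$ to preserve each tensor, i.e.\ $\G\prm{\XX}=\bigoplus_{i}\G_{i}$. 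Second, within a block, preservation of the ``agree in coordinate $d$'' classes forces each $\G_{i}$ to factor as $\bigotimes_{d\in\RR_{i}}\G^{d}$, and --- the crucial subtlety for self-relations --- preservation of the diagonal class of a repeated entity forces \emph{the same} $\G^{d}$ on all its occurrences (otherwise a diagonal index is sent off the diagonal, changing its class). Third, the off-diagonal block $\W^{i,j}$ of two relations sharing an entity $d$ contains a class equating the $d$-coordinate of the row with that of the column, and preserving it forces $\G^{d}$ to agree between blocks $i$ and $j$. Gluing the per-block factors yields a single $\G^{d}$ per entity, so $\G\prm{\XX}=\bigoplus_{\RR}\bigotimes_{d\in\RR}\G^{d}\in\grn{G}{\XX}$, completing the argument.
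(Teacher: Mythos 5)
Your proposal is correct, and while it shares the same combinatorial core as the paper's argument, it is organized around a genuinely different pivot. The paper's central tool is \cref{lem:inter_matrix}, a bespoke block-level statement ($\K^{i}\W^{i,j}=\W^{i,j}\K^{j}$ iff $\K^{i},\K^{j}$ are Kronecker products over a shared family $\{\G^{d}\}$) proved by direct index manipulation; equivariance of a tied $\W$ then follows blockwise, maximality is handled by a three-case analysis of illegal permutations (not block-diagonal; block-diagonal but not a Kronecker product; Kronecker products with mismatched factors), and the theorem's only-if direction is proved by explicitly constructing a legal swap permutation that fails to commute with any untied $\W$. You instead make the orbit picture primary: your lemma identifying the tying classes with the orbits of $\grn{G}{\XX}$ on index pairs, combined with the standard commutant characterization for finite permutation groups (which the paper itself cites in \cref{sec:layer}), yields in one stroke both ``tied $\Rightarrow$ commutes with every legal permutation'' and ``EERL $\Rightarrow$ tied'' --- the latter being exactly what the paper re-derives by hand with its swap construction, so no content is lost by black-boxing it. Your three-step maximality argument (block preservation via diagonal classes, per-block Kronecker factorization with equal factors on repeated occurrences, gluing of factors across blocks sharing an entity) is structurally parallel to the paper's three cases. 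The one place where you are more careful than the paper is the explicit genericity hypothesis (distinct tying classes carry distinct values): the paper uses it silently --- e.g., its first case asserts $\W^{i,i}_{\n^{i},\n^{i}}\neq\W^{j,j}_{\n^{j},\n^{j}}$ ``since $i\neq j$'' --- and some such hypothesis is genuinely necessary, since a degenerate tied matrix such as $\W=\eye$ commutes with all of $\grn{S}{N}$ and hence is not an EERL in the sense of \cref{def:layer}. In short, the paper's route buys a self-contained, computation-explicit block lemma; yours buys modularity, a cleaner derivation of the only-if half from a known result, and the surfacing of a hidden assumption that the theorem statement actually needs.
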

\end{mdframed}
\vspace{-1em}
Note that this theorem guarantees the equivariance of the feedforward layer, but it is also guarantees that the layer is not equivariant to any larger group $\gr{H}$, with $\grn{G}{\XX} < \gr{H} \leq \gr{S}\prm{N}$ (This is part of the definition of EERL); and further it guarantees \emph{maximality}, meaning that the proposed layer contains all the $\G\prm{\XX}$-equivariant feed-forward maps (this follows from  the only if direction of the theorem). \footnote{To see the distinction between properties 1 and 2, consider the example of 1D convolution: using a small kernel-size, the convolution layer satisfies 1, but not 2. There are permutation groups that only have layers satisfying 2 but not 1 -- \eg Alternating group with its standard action.}

\section{Experiments}\label{sec:experiments}
Our experiments compare EERN against two variants of coupled matrix tensor factorization (CMTF) for prediction of missing records on both synthetic and real-world data, and conduct ablation studies of EERN on synthetic data for inductive reasoning and side information utilization. Ablation studies appear in \ifbool{APX}{\cref{sec:ablation}}{Appendix I}.
We use a \textit{factorized auto-encoding} architecture consisting of a stack of EERL followed by pooling that produces emebeddings for each entity. The code is then fed to a decoding stack of EERL to reconstruct the sparse $\vec{\XX}$; see \ifbool{APX}{\cref{sec:experiments_details}}{Appendix H} for all the details.
We use two variants of CMTF~\citep{yilmaz2011generalised} as baselines for missing record prediction implemented through MATLAB Tensorlab package~\citep{vervliet2016tensorlab}: Coupled CP Factorization (C-CPF) and Coupled Tucker Factorization (C-TKF). C-CPF applies CANDECOMP/PARAFAC decomposition~\citep{hitchcock1927expression,carroll1970analysis,harshman1970foundations} to each input tensor/matrix while sharing latent factors across inputs when there is coupling of entities; C-TKF similarly applies Tucker decomposition~\citep{tucker1966some}.

\begin{table}
\vspace*{-3em}
\caption{\footnotesize{Test RMSE of synthetic data, for different sparsity levels, when the data-generation process is based on the assumption of Coupled CP Factorization (C-CPF) or Coupled Tucker Factorization (C-TKF).}}
\label{tab:mse}
\centering
\scalebox{.7}{
\begin{tabular}{r | c c | c c | c c}
\toprule
Sparsity $\rightarrow$ & \multicolumn{2}{c|}{10 \% } & \multicolumn{2}{c|}{50 \% } & \multicolumn{2}{c}{90 \%} \\ 
   Data Gen. $\rightarrow$ & CP  & Tucker & CP  & Tucker & CP & Tucker \\ 
Method $\downarrow$ & & & & & & \\
C-TKF & $1.720\pm0.025$          & $1.441\pm0.043$             & $1.096\pm0.005$          & $0.954\pm0.006$             & $0.531\pm0.031$          & $0.522\pm0.007$             \\ 
C-CPF     & $1.722\pm0.027$          & $1.411\pm0.044$             & $1.092\pm0.006$          & $0.968\pm0.009$             & $0.375\pm0.0004$          & $0.397\pm0.001$    
\\ 
EERN (ours)                        & $\mathbf{0.517}\pm0.046$ & $\mathbf{0.180}\pm0.013$    & $\mathbf{0.140}\pm0.012$ & $\mathbf{0.0824}\pm0.004$   & $\mathbf{0.101}\pm0.009$ & $\mathbf{0.0469}\pm0.001$   \\ 
\bottomrule
\end{tabular}
}
\end{table}
\subsection{Synthetic Data}
To continue with our running example we synthesize a toy dataset, restricted to $\RR_1 = \{1,2\}$ (\kw{student-course}),
$\RR_2 = \{1,3\}$ (\kw{student-professor}), and $\RR_3 = \{2,3\}$ (\kw{professor-course}), \footnote{The code for our experiments is available at 
<anonymous> (synthetic data) and 
<anonymous> (soccer data).} and evaluate on the RMSE of predicting missing entries of \kw{student-course} relation table. 
Each matrix $\X^{\{d_1,d_2\}} \in \Real^{N_{d_1} \times N_{d_2}}$ in the relational database, $\XX = \tuple{ \X^{\{1,2\}}, \X^{\{1,3\}}, \X^{\{2,3\}}}$, is produced by first uniformly sampling an h-dimensional embedding for each entity instance $\Z^{d} \in \Real^{N_d \times h}$, followed by either matrix product $\X^{\{d_1, d_2\}} := \Z^{d_1} {\Z^{d_2}}^{\mathsf{T}}$ or $\X^{\{d_1, d_2\}} := \Z^{d_1} {\mathbf{C}^{d_1d_2}} {\Z^{d_2}}^{\mathsf{T}}$, where $\mathbf{C}^{d_1d_2}$ is a random core matrix generated for each relation. These two operations reflect the assumed data generation process of Coupled CP and Tucker decomposition respectively, which we use as CMTF baselines.
A sparse subset of these matrices are observed and the missing values are predicted.
Table \ref{tab:mse} shows that EERN obtains significantly lower RMSE than both C-TKF and C-CPF on all sparsity levels even though the data generation process matches the assumption of C-TKF and C-CPF. 

\subsection{Real-World Data}

\paragraph{Soccer Data.} We use the European Soccer database\footnote{Dataset retrieved from \url{https://www.kaggle.com/hugomathien/soccer}} to build a simple relational model with three entities: \kw{player, team} and \kw{match}. The database contains information for about 11,000 \kw{players}, 300 \kw{teams} and 25,000 \kw{matches} in European soccer leagues. We extract a \kw{competes-in} relation between \kw{teams} and \kw{matches}, as well as a \kw{plays-in} relation between \kw{players} and \kw{matches} that identifies which players played in each match. 
Our objective is to predict whether the outcome of a match was \textit{Home Win}, \textit{Away Win}, or \textit{Draw}, as well as the score difference between home team and away team. 
A simple baseline is to always predict \textit{Home Win}, which obtains 46\% accuracy. By engineering features from temporal statistics (such as the result of recent games for a team relative to a particular target match, recent games two teams played against each other, as well as recent goal statistics) the best model reported on Kaggle achieve 56\% accuracy. 
Without using any temporal data, by simply taking the average for any such time series, our model achieves 53\% accuracy. This also matches the accuracy of professional bookies; see \cref{tab:real}. EERN also outperforms both C-TKF and C-CPF on predicting the goal difference.\footnote{Existing implementations for C-TKF and C-CPF only use square loss and therefore we could not use them for classification of the game outcome. Instead we defined the task for these methods to be prediction of goal difference between home and away, where we could use square loss.}

\begin{wraptable}{r}{.6 \textwidth}
\vspace*{-3em}
    \caption{\footnotesize{Performance on real-world relational data; predicting the outcome of games.}}
      \centering
      \scalebox{.8}{
        \begin{tabular}{r c c c c}
        \toprule
      & \multicolumn{2}{c}{Soccer (UEFA)} & \multicolumn{2}{c}{Hockey (NHL)} \\ \cline{2-5}
      & RMSE & Accuracy & RMSE & Accuracy \\ \midrule
C-TKF & 1.834 & - & 2.427 & - \\ 
C-CPF & 1.823 & - & 2.404& - \\ 
EERN (ours)  & \textbf{1.603} & \textbf{53\%} & \textbf{2.026} & \textbf{74\%} \\
\bottomrule
\end{tabular}
}
\label{tab:real}
\vspace*{-1em}
\end{wraptable}

\paragraph{Hockey Data.} We use an NHL hockey database\footnote{Dataset retrieved from \url{https://www.kaggle.com/martinellis/nhl-game-data}} to build a similar relational model to the previous experiment. The database contains information for about 2,212 \kw{players}, 33 \kw{teams} and 11,434 \kw{matches} in the NHL hockey league. We extract the same relations as for soccer data, our objective is to predict whether the outcome of a match was \textit{Home Win} or \textit{Away Win} (no draw in data), as well as the score difference between home and away team. The best model reported on Kaggle achieves 62\% accuracy, and our model achieves 74\% for predicting \textit{Home Win} or \textit{Away Win}.
Shown in Table \ref{tab:real}, EERN obtains significantly lower RMSE than CMTF baselines.

\section*{Conclusion and Future Work}\label{sec:discussion}
We have outlined a novel and principled approach to deep learning with relational data(bases).
In particular, we introduced a simple constraint in the form of tied parameters for the standard feed-forward layer and proved that any other tying scheme either ignores the exchangeabilities of relational data or can be obtained by further constraining our model. The proposed layer can be applied in inductive settings, where the relational databases used during training and test have no overlap. 
While our model enjoys a linear computational complexity in the size of the database, we have to overcome one more hurdle before applying this model to large-scale real-world databases: 
relational databases often hold large amount of data,
and in order for our model to be applied in these settings we need to perform mini-batch sampling. 
However, any such sampling has the effect of sparsifying the observed relations. A careful sampling procedure is required that minimizes this sparsification for a particular subset of entities or relations. 
While several recent works propose solutions to similar problems on graphs and tensors~\citep[\eg][]{hamilton2017inductive,hartford2018deep,ying2018graph,eksombatchai2017pixie,chen2018fastgcn,huang2018adaptive}, we leave this important direction for relational databases to future work.

\section*{Broader Impact}
Across various sectors, from healthcare to infrastructure, education to science and technology, often bigdata is stored in a relational database. Therefore the problem of learning and inference on this ubiquitous data structure is highly motivated.
To our knowledge, this paper is the first principled attempt at exposing this abundant form of data to our most successful machine learning paradigm, deep learning. While admittedly further steps are needed to handle large real-world databases,
the the theoretical framework discussed here could lead to development of software systems capable of inference and prediction on 
arbitrary relational databases, hopefully leading to a positive societal impact.



\bibliography{refs.bib}

\begin{thebibliography}{63}
\providecommand{\natexlab}[1]{#1}
\providecommand{\url}[1]{\texttt{#1}}
\expandafter\ifx\csname urlstyle\endcsname\relax
  \providecommand{\doi}[1]{doi: #1}\else
  \providecommand{\doi}{doi: \begingroup \urlstyle{rm}\Url}\fi

\bibitem[Acar et~al.(2017)Acar, Levin-Schwartz, Calhoun, and
  Adali]{acar2017acmtf}
Evrim Acar, Yuri Levin-Schwartz, Vince~D Calhoun, and T{\"u}lay Adali.
\newblock Acmtf for fusion of multi-modal neuroimaging data and identification
  of biomarkers.
\newblock In \emph{2017 25th European Signal Processing Conference (EUSIPCO)},
  pages 643--647. IEEE, 2017.

\bibitem[Albooyeh et~al.(2019)Albooyeh, Bertolini, and
  Ravanbakhsh]{albooyeh2019incidence}
Marjan Albooyeh, Daniele Bertolini, and Siamak Ravanbakhsh.
\newblock Incidence networks for geometric deep learning.
\newblock \emph{arXiv preprint arXiv:1905.11460}, 2019.

\bibitem[Anandkumar et~al.(2014)Anandkumar, Ge, Hsu, Kakade, and
  Telgarsky]{anandkumar2014tensor}
Animashree Anandkumar, Rong Ge, Daniel Hsu, Sham~M Kakade, and Matus Telgarsky.
\newblock Tensor decompositions for learning latent variable models.
\newblock \emph{The Journal of Machine Learning Research}, 15\penalty0
  (1):\penalty0 2773--2832, 2014.

\bibitem[Anselmi et~al.(2019)Anselmi, Evangelopoulos, Rosasco, and
  Poggio]{anselmi2019symmetry}
Fabio Anselmi, Georgios Evangelopoulos, Lorenzo Rosasco, and Tomaso Poggio.
\newblock Symmetry-adapted representation learning.
\newblock \emph{Pattern Recognition}, 86:\penalty0 201--208, 2019.

\bibitem[Bahargam and Papalexakis(2018)]{bahargam2018constrained}
Sanaz Bahargam and Evangelos~E Papalexakis.
\newblock Constrained coupled matrix-tensor factorization and its application
  in pattern and topic detection.
\newblock In \emph{2018 IEEE/ACM International Conference on Advances in Social
  Networks Analysis and Mining (ASONAM)}, pages 91--94. IEEE, 2018.

\bibitem[Battaglia et~al.(2016)Battaglia, Pascanu, Lai, Rezende,
  et~al.]{battaglia2016interaction}
Peter Battaglia, Razvan Pascanu, Matthew Lai, Danilo~Jimenez Rezende, et~al.
\newblock Interaction networks for learning about objects, relations and
  physics.
\newblock In \emph{Advances in neural information processing systems}, pages
  4502--4510, 2016.

\bibitem[Battaglia et~al.(2018)Battaglia, Hamrick, Bapst, Sanchez-Gonzalez,
  Zambaldi, Malinowski, Tacchetti, Raposo, Santoro, Faulkner,
  et~al.]{battaglia2018relational}
Peter~W Battaglia, Jessica~B Hamrick, Victor Bapst, Alvaro Sanchez-Gonzalez,
  Vinicius Zambaldi, Mateusz Malinowski, Andrea Tacchetti, David Raposo, Adam
  Santoro, Ryan Faulkner, et~al.
\newblock Relational inductive biases, deep learning, and graph networks.
\newblock \emph{arXiv preprint arXiv:1806.01261}, 2018.

\bibitem[Bloem-Reddy and Whye~Teh(2019)]{bloemreddy}
Benjamin Bloem-Reddy and Yee Whye~Teh.
\newblock Probabilistic symmetry and invariant neural networks.
\newblock \emph{arXiv preprint, arXiv:1901.06082}, 2019.

\bibitem[Bronstein et~al.(2017)Bronstein, Bruna, LeCun, Szlam, and
  Vandergheynst]{bronstein2017geometric}
Michael~M Bronstein, Joan Bruna, Yann LeCun, Arthur Szlam, and Pierre
  Vandergheynst.
\newblock Geometric deep learning: going beyond euclidean data.
\newblock \emph{IEEE Signal Processing Magazine}, 34\penalty0 (4):\penalty0
  18--42, 2017.

\bibitem[Bruna et~al.(2014)Bruna, Zaremba, Szlam, and LeCun]{bruna2013spectral}
Joan Bruna, Wojciech Zaremba, Arthur Szlam, and Yann LeCun.
\newblock Spectral networks and locally connected networks on graphs.
\newblock \emph{ICLR}, 2014.

\bibitem[Carroll and Chang(1970)]{carroll1970analysis}
J~Douglas Carroll and Jih-Jie Chang.
\newblock Analysis of individual differences in multidimensional scaling via an
  n-way generalization of “eckart-young” decomposition.
\newblock \emph{Psychometrika}, 35\penalty0 (3):\penalty0 283--319, 1970.

\bibitem[Chen et~al.(2018)Chen, Ma, and Xiao]{chen2018fastgcn}
Jie Chen, Tengfei Ma, and Cao Xiao.
\newblock Fastgcn: Fast learning with graph convolutional networks via
  importance sampling.
\newblock \emph{CoRR}, abs/1801.10247, 2018.

\bibitem[Chen(1976)]{chen1976entity}
Peter Pin-Shan Chen.
\newblock The entity-relationship model—toward a unified view of data.
\newblock \emph{ACM Transactions on Database Systems (TODS)}, 1\penalty0
  (1):\penalty0 9--36, 1976.

\bibitem[Cohen and Welling(2016{\natexlab{a}})]{cohen2016group}
Taco~S Cohen and Max Welling.
\newblock Group equivariant convolutional networks.
\newblock \emph{arXiv preprint arXiv:1602.07576}, 2016{\natexlab{a}}.

\bibitem[Cohen and Welling(2016{\natexlab{b}})]{cohen2016steerable}
Taco~S Cohen and Max Welling.
\newblock Steerable cnns.
\newblock \emph{arXiv preprint arXiv:1612.08498}, 2016{\natexlab{b}}.

\bibitem[Cohen et~al.(2018)Cohen, Geiger, K{\"o}hler, and
  Welling]{cohen2018spherical}
Taco~S Cohen, Mario Geiger, Jonas K{\"o}hler, and Max Welling.
\newblock Spherical cnns.
\newblock \emph{arXiv preprint arXiv:1801.10130}, 2018.

\bibitem[Duvenaud et~al.(2015)Duvenaud, Maclaurin, Iparraguirre, Bombarell,
  Hirzel, Aspuru-Guzik, and Adams]{duvenaud2015convolutional}
David~K Duvenaud, Dougal Maclaurin, Jorge Iparraguirre, Rafael Bombarell,
  Timothy Hirzel, Al{\'a}n Aspuru-Guzik, and Ryan~P Adams.
\newblock Convolutional networks on graphs for learning molecular fingerprints.
\newblock In \emph{Advances in neural information processing systems}, 2015.

\bibitem[Eksombatchai et~al.(2017)Eksombatchai, Jindal, Liu, Liu, Sharma,
  Sugnet, Ulrich, and Leskovec]{eksombatchai2017pixie}
Chantat Eksombatchai, Pranav Jindal, Jerry~Zitao Liu, Yuchen Liu, Rahul Sharma,
  Charles Sugnet, Mark Ulrich, and Jure Leskovec.
\newblock Pixie: {A} system for recommending 3+ billion items to 200+ million
  users in real-time.
\newblock \emph{CoRR}, abs/1711.07601, 2017.

\bibitem[Ermi{\c{s}} et~al.(2015)Ermi{\c{s}}, Acar, and Cemgil]{ermics2015link}
Beyza Ermi{\c{s}}, Evrim Acar, and A~Taylan Cemgil.
\newblock Link prediction in heterogeneous data via generalized coupled tensor
  factorization.
\newblock \emph{Data Mining and Knowledge Discovery}, 29\penalty0 (1):\penalty0
  203--236, 2015.

\bibitem[Getoor and Taskar(2007)]{getoor2007introduction}
Lise Getoor and Ben Taskar.
\newblock \emph{Introduction to statistical relational learning}.
\newblock MIT press, 2007.

\bibitem[Gilmer et~al.(2017)Gilmer, Schoenholz, Riley, Vinyals, and
  Dahl]{gilmer2017neural}
Justin Gilmer, Samuel~S Schoenholz, Patrick~F Riley, Oriol Vinyals, and
  George~E Dahl.
\newblock Neural message passing for quantum chemistry.
\newblock \emph{arXiv preprint arXiv:1704.01212}, 2017.

\bibitem[Hamilton et~al.(2017{\natexlab{a}})Hamilton, Ying, and
  Leskovec]{hamilton2017inductive}
Will Hamilton, Zhitao Ying, and Jure Leskovec.
\newblock Inductive representation learning on large graphs.
\newblock In \emph{Advances in Neural Information Processing Systems}, pages
  1024--1034, 2017{\natexlab{a}}.

\bibitem[Hamilton et~al.(2017{\natexlab{b}})Hamilton, Ying, and
  Leskovec]{hamilton2017representation}
William~L Hamilton, Rex Ying, and Jure Leskovec.
\newblock Representation learning on graphs: Methods and applications.
\newblock \emph{arXiv preprint arXiv:1709.05584}, 2017{\natexlab{b}}.

\bibitem[Harshman et~al.(1970)]{harshman1970foundations}
Richard~A Harshman et~al.
\newblock Foundations of the parafac procedure: Models and conditions for an"
  explanatory" multimodal factor analysis.
\newblock 1970.

\bibitem[Hartford et~al.(2018)Hartford, Graham, Leyton-Brown, and
  Ravanbakhsh]{hartford2018deep}
Jason Hartford, Devon~R Graham, Kevin Leyton-Brown, and Siamak Ravanbakhsh.
\newblock Deep models of interactions across sets.
\newblock In \emph{Proceedings of the 35th International Conference on Machine
  Learning}, pages 1909--1918, 2018.

\bibitem[Hitchcock(1927)]{hitchcock1927expression}
Frank~L Hitchcock.
\newblock The expression of a tensor or a polyadic as a sum of products.
\newblock \emph{Journal of Mathematics and Physics}, 6\penalty0 (1-4):\penalty0
  164--189, 1927.

\bibitem[Huang et~al.(2018)Huang, Zhang, Rong, and Huang]{huang2018adaptive}
Wenbing Huang, Tong Zhang, Yu~Rong, and Junzhou Huang.
\newblock Adaptive sampling towards fast graph representation learning.
\newblock In \emph{Advances in Neural Information Processing Systems 31}, pages
  4559--4568. 2018.

\bibitem[Ioffe and Szegedy(2015)]{ioffe2015batch}
Sergey Ioffe and Christian Szegedy.
\newblock Batch normalization: Accelerating deep network training by reducing
  internal covariate shift.
\newblock \emph{arXiv preprint arXiv:1502.03167}, 2015.

\bibitem[Kazemi and Poole(2017)]{kazemi2017relnn}
Seyed~Mehran Kazemi and David Poole.
\newblock Relnn: a deep neural model for relational learning.
\newblock \emph{arXiv preprint arXiv:1712.02831}, 2017.

\bibitem[Kazemi et~al.(2014)Kazemi, Buchman, Kersting, Natarajan, and
  Poole]{kazemi2014relational}
Seyed~Mehran Kazemi, David Buchman, Kristian Kersting, Sriraam Natarajan, and
  David Poole.
\newblock Relational logistic regression.
\newblock In \emph{KR}. Vienna, 2014.

\bibitem[Kearnes et~al.(2016)Kearnes, McCloskey, Berndl, Pande, and
  Riley]{kearnes2016molecular}
Steven Kearnes, Kevin McCloskey, Marc Berndl, Vijay Pande, and Patrick Riley.
\newblock Molecular graph convolutions: moving beyond fingerprints.
\newblock \emph{Journal of computer-aided molecular design}, 30\penalty0
  (8):\penalty0 595--608, 2016.

\bibitem[Kersting(2012)]{kersting2012lifted}
Kristian Kersting.
\newblock Lifted probabilistic inference.
\newblock In \emph{ECAI}, pages 33--38, 2012.

\bibitem[Kingma and Ba(2014)]{kingma2014adam}
Diederik~P Kingma and Jimmy Ba.
\newblock Adam: A method for stochastic optimization.
\newblock \emph{arXiv preprint arXiv:1412.6980}, 2014.

\bibitem[Kipf and Welling(2016)]{kipf2016semi}
Thomas~N Kipf and Max Welling.
\newblock Semi-supervised classification with graph convolutional networks.
\newblock \emph{arXiv preprint arXiv:1609.02907}, 2016.

\bibitem[Koller et~al.(2009)Koller, Friedman, and
  Bach]{koller2009probabilistic}
Daphne Koller, Nir Friedman, and Francis Bach.
\newblock \emph{Probabilistic graphical models: principles and techniques}.
\newblock MIT press, 2009.

\bibitem[Kondor and Trivedi(2018)]{kondor2018generalization}
Risi Kondor and Shubhendu Trivedi.
\newblock On the generalization of equivariance and convolution in neural
  networks to the action of compact groups.
\newblock \emph{arXiv preprint arXiv:1802.03690}, 2018.

\bibitem[Kondor et~al.(2018{\natexlab{a}})Kondor, Lin, and
  Trivedi]{kondor2018clebsch}
Risi Kondor, Zhen Lin, and Shubhendu Trivedi.
\newblock Clebsch--gordan nets: a fully fourier space spherical convolutional
  neural network.
\newblock In \emph{Advances in Neural Information Processing Systems}, pages
  10137--10146, 2018{\natexlab{a}}.

\bibitem[Kondor et~al.(2018{\natexlab{b}})Kondor, Son, Pan, Anderson, and
  Trivedi]{kondor2018covariant}
Risi Kondor, Hy~Truong Son, Horace Pan, Brandon Anderson, and Shubhendu
  Trivedi.
\newblock Covariant compositional networks for learning graphs.
\newblock \emph{arXiv preprint arXiv:1801.02144}, 2018{\natexlab{b}}.

\bibitem[Lam et~al.(2018)Lam, Minh, Sinn, Buesser, and
  Wistuba]{lam2018learning}
Hoang~Thanh Lam, Tran~Ngoc Minh, Mathieu Sinn, Beat Buesser, and Martin
  Wistuba.
\newblock Learning features for relational data.
\newblock \emph{arXiv preprint arXiv:1801.05372}, 2018.

\bibitem[Li et~al.(2015)Li, Tarlow, Brockschmidt, and Zemel]{li2015gated}
Yujia Li, Daniel Tarlow, Marc Brockschmidt, and Richard Zemel.
\newblock Gated graph sequence neural networks.
\newblock \emph{arXiv preprint arXiv:1511.05493}, 2015.

\bibitem[Maron et~al.(2018)Maron, Ben-Hamu, Shamir, and
  Lipman]{maron2018invariant}
Haggai Maron, Heli Ben-Hamu, Nadav Shamir, and Yaron Lipman.
\newblock Invariant and equivariant graph networks.
\newblock \emph{arXiv preprint arXiv:1812.09902}, 2018.

\bibitem[Minsky and Papert(2017)]{minsky2017perceptrons}
Marvin Minsky and Seymour~A Papert.
\newblock \emph{Perceptrons: An introduction to computational geometry}.
\newblock MIT press, 2017.

\bibitem[Nickel et~al.(2016)Nickel, Murphy, Tresp, and
  Gabrilovich]{nickel2016review}
Maximilian Nickel, Kevin Murphy, Volker Tresp, and Evgeniy Gabrilovich.
\newblock A review of relational machine learning for knowledge graphs.
\newblock \emph{Proceedings of the IEEE}, 104\penalty0 (1):\penalty0 11--33,
  2016.

\bibitem[Orbanz and Roy(2015)]{orbanz2015bayesian}
Peter Orbanz and Daniel~M Roy.
\newblock Bayesian models of graphs, arrays and other exchangeable random
  structures.
\newblock \emph{IEEE transactions on pattern analysis and machine
  intelligence}, 37\penalty0 (2):\penalty0 437--461, 2015.

\bibitem[Raedt et~al.(2016)Raedt, Kersting, Natarajan, and
  Poole]{raedt2016statistical}
Luc~De Raedt, Kristian Kersting, Sriraam Natarajan, and David Poole.
\newblock Statistical relational artificial intelligence: Logic, probability,
  and computation.
\newblock \emph{Synthesis Lectures on Artificial Intelligence and Machine
  Learning}, 10\penalty0 (2):\penalty0 1--189, 2016.

\bibitem[Ravanbakhsh et~al.(2017)Ravanbakhsh, Schneider, and
  Poczos]{ravanbakhsh_symmetry}
Siamak Ravanbakhsh, Jeff Schneider, and Barnabas Poczos.
\newblock Equivariance through parameter-sharing.
\newblock In \emph{Proceedings of the 34th International Conference on Machine
  Learning}, volume~70 of \emph{JMLR: WCP}, August 2017.

\bibitem[Robinson et~al.(2013)Robinson, Webber, and Eifrem]{robinson2013graph}
Ian Robinson, Jim Webber, and Emil Eifrem.
\newblock \emph{Graph databases}.
\newblock " O'Reilly Media, Inc.", 2013.

\bibitem[Sabour et~al.(2017)Sabour, Frosst, and Hinton]{sabour2017dynamic}
Sara Sabour, Nicholas Frosst, and Geoffrey~E Hinton.
\newblock Dynamic routing between capsules.
\newblock In \emph{Advances in Neural Information Processing Systems}, pages
  3856--3866, 2017.

\bibitem[Scarselli et~al.(2009)Scarselli, Gori, Tsoi, Hagenbuchner, and
  Monfardini]{scarselli2009graph}
Franco Scarselli, Marco Gori, Ah~Chung Tsoi, Markus Hagenbuchner, and Gabriele
  Monfardini.
\newblock The graph neural network model.
\newblock \emph{IEEE Transactions on Neural Networks}, 20\penalty0
  (1):\penalty0 61--80, 2009.

\bibitem[Schlichtkrull et~al.(2018)Schlichtkrull, Kipf, Bloem, van~den Berg,
  Titov, and Welling]{schlichtkrull2018modeling}
Michael Schlichtkrull, Thomas~N Kipf, Peter Bloem, Rianne van~den Berg, Ivan
  Titov, and Max Welling.
\newblock Modeling relational data with graph convolutional networks.
\newblock In \emph{European Semantic Web Conference}, pages 593--607. Springer,
  2018.

\bibitem[Sch{\"u}tt et~al.(2017)Sch{\"u}tt, Arbabzadah, Chmiela, M{\"u}ller,
  and Tkatchenko]{schutt2017quantum}
Kristof~T Sch{\"u}tt, Farhad Arbabzadah, Stefan Chmiela, Klaus~R M{\"u}ller,
  and Alexandre Tkatchenko.
\newblock Quantum-chemical insights from deep tensor neural networks.
\newblock \emph{Nature communications}, 8:\penalty0 13890, 2017.

\bibitem[Shawe-Taylor(1989)]{shawe1989building}
John Shawe-Taylor.
\newblock Building symmetries into feedforward networks.
\newblock In \emph{Artificial Neural Networks, 1989., First IEE International
  Conference on (Conf. Publ. No. 313)}, pages 158--162. IET, 1989.

\bibitem[Shawe-Taylor(1993)]{shawe1993symmetries}
John Shawe-Taylor.
\newblock Symmetries and discriminability in feedforward network architectures.
\newblock \emph{IEEE Transactions on Neural Networks}, 4\penalty0 (5):\penalty0
  816--826, 1993.

\bibitem[Sorber et~al.(2015)Sorber, Van~Barel, and
  De~Lathauwer]{sorber2015structured}
Laurent Sorber, Marc Van~Barel, and Lieven De~Lathauwer.
\newblock Structured data fusion.
\newblock \emph{IEEE Journal of Selected Topics in Signal Processing},
  9\penalty0 (4):\penalty0 586--600, 2015.

\bibitem[Tucker(1966)]{tucker1966some}
Ledyard~R Tucker.
\newblock Some mathematical notes on three-mode factor analysis.
\newblock \emph{Psychometrika}, 31\penalty0 (3):\penalty0 279--311, 1966.

\bibitem[Vervliet et~al.(2016)Vervliet, Debals, and
  De~Lathauwer]{vervliet2016tensorlab}
Nico Vervliet, Otto Debals, and Lieven De~Lathauwer.
\newblock Tensorlab 3.0—numerical optimization strategies for large-scale
  constrained and coupled matrix/tensor factorization.
\newblock In \emph{2016 50th Asilomar Conference on Signals, Systems and
  Computers}, pages 1733--1738. IEEE, 2016.

\bibitem[Weiler et~al.(2017)Weiler, Hamprecht, and Storath]{weiler2017learning}
Maurice Weiler, Fred~A Hamprecht, and Martin Storath.
\newblock Learning steerable filters for rotation equivariant cnns.
\newblock \emph{arXiv preprint arXiv:1711.07289}, 2017.

\bibitem[Weiler et~al.(2018)Weiler, Boomsma, Geiger, Welling, and
  Cohen]{weiler20183d}
Maurice Weiler, Wouter Boomsma, Mario Geiger, Max Welling, and Taco Cohen.
\newblock 3d steerable cnns: Learning rotationally equivariant features in
  volumetric data.
\newblock In \emph{Advances in Neural Information Processing Systems}, pages
  10401--10412, 2018.

\bibitem[Worrall et~al.(2017)Worrall, Garbin, Turmukhambetov, and
  Brostow]{worrall2017harmonic}
Daniel~E Worrall, Stephan~J Garbin, Daniyar Turmukhambetov, and Gabriel~J
  Brostow.
\newblock Harmonic networks: Deep translation and rotation equivariance.
\newblock In \emph{Proc. IEEE Conf. on Computer Vision and Pattern Recognition
  (CVPR)}, volume~2, 2017.

\bibitem[Xu et~al.(2015)Xu, Wang, Chen, and Li]{xu2015empirical}
Bing Xu, Naiyan Wang, Tianqi Chen, and Mu~Li.
\newblock Empirical evaluation of rectified activations in convolutional
  network.
\newblock \emph{arXiv preprint arXiv:1505.00853}, 2015.

\bibitem[Y{\i}lmaz et~al.(2011)Y{\i}lmaz, Cemgil, and
  Simsekli]{yilmaz2011generalised}
Kenan~Y Y{\i}lmaz, Ali~T Cemgil, and Umut Simsekli.
\newblock Generalised coupled tensor factorisation.
\newblock In \emph{Advances in neural information processing systems}, pages
  2151--2159, 2011.

\bibitem[Ying et~al.(2018)Ying, He, Chen, Eksombatchai, Hamilton, and
  Leskovec]{ying2018graph}
Rex Ying, Ruining He, Kaifeng Chen, Pong Eksombatchai, William~L Hamilton, and
  Jure Leskovec.
\newblock Graph convolutional neural networks for web-scale recommender
  systems.
\newblock \emph{arXiv preprint arXiv:1806.01973}, 2018.

\bibitem[Zaheer et~al.(2017)Zaheer, Kottur, Ravanbakhsh, Poczos, Salakhutdinov,
  and Smola]{zaheer_deepsets}
Manzil Zaheer, Satwik Kottur, Siamak Ravanbakhsh, Barnabas Poczos, Ruslan~R
  Salakhutdinov, and Alexander~J Smola.
\newblock Deep sets.
\newblock In \emph{Advances in Neural Information Processing Systems}, 2017.

\end{thebibliography}
\bibliographystyle{plainnat}

\ifbool{APX}{\appendix 

\section{A More Detailed Review of Related Literature}\label{app:related}
To our knowledge there are no similar frameworks for direct application of 
deep models to relational databases, and current practice is to 
automate feature-engineering for specific prediction tasks~\citep{lam2018learning}.

\subsection{Statistical Relational Learning and Knowledge-Graph Embedding}\label{app:stat-rel}
Statistical relational learning extends the reach of probabilistic inference to the relational model~\citep{raedt2016statistical}. 
In particular, a variety of work in \textit{lifted inference} procedures extends inference methods in graphical models to the relational setting, where in some cases the symmetry group of the model is used to speed up inference~\citep{kersting2012lifted}. 
Most relevant to our work from this community is the Relational Neural Network model of~\citet{kazemi2017relnn}; see \cref{app:kazemi}.

An alternative to inference with symbolic representations of relational data is to use embeddings. In particular, \textit{Tensor factorization} methods 
 that offer tractable inference in latent variable graphical models ~\citep{anandkumar2014tensor}, 
are extensively used for 
knowledge-graph embedding~\citep{nickel2016review}. 
A knowledge-graph can be expressed as an ER diagram with a single relation $\RR = \{1_{(1)}, 2, 1_{(2)}\}$, where $1_{(1)}, 1_{(2)}$  representing \kw{head} and \kw{tail} entities and $2$ is an entity representing the \kw{relation}. 
Alternatively, one could think of knowledge-graph as a graph representation for an instantiated ER diagram (as opposed to a set of of tables or tensors). However, in knowledge-graphs, an {entity-type} is a second class citizen, as it is either another attribute, or it is expressed through relations to special objects representing different ``types''. 
Therefore, compared to ER diagrams, knowledge-graphs are less structured, and more suitable for representing a variety of relations between different objects, where the distinction between entity types is not key. 
 
\subsubsection{Coupled Matrix Tensor Factorization} \label{app:coupled}
Tensor factorization has been extended to Coupled matrix tensor factorization (CMTF)~\citep{yilmaz2011generalised} to handle multiple data sources jointly, by sharing latent factors across inputs when there is coupling of entities. Structured data fusion~\citep{sorber2015structured} further extended traditional CMTF to handle certain transformation and regularization on factor matrices and support arbitrary coupling of input tensors. CMTF is previously used for topic modelling~\citep{bahargam2018constrained}, brain signal analysis~\citep{acar2017acmtf} and network analysis~\citep{ermics2015link} where joint analysis of data from different modes or sources enhances the signal, similar to the case of relational databases in this work.
 
 \subsubsection{Relation to RelNN of \citet{kazemi2017relnn}}\label{app:kazemi}
An alternative approach explored in the statistical relational learning community includes extensions of logistic regression to relational data~\citep{kazemi2014relational}, and further extensions to multiple layers~\citep{kazemi2017relnn}. The focus of these works is primarily on predicting properties of the various entity instances (the example they use is predicting a user's gender based on the ratings given to movies). 
 
 Their model works by essentially counting the number of instances satisfy a given properties, but is easiest understood by interpreting it as a series of convolution operations using row- and column-wise filters that capture these properties. 
 Consider Example 3 from~\citep{kazemi2017relnn} (also depicted in their Figure 4). We have a set of \kw{users} and a set of \kw{movies}, and there is a matrix $\mat{L} \in \Real^{N \times M}$, denoting which movies where liked by which users. As filters, they use binary vectors $\vect{a} \in \Real^M$, and $\vect{o} \in \Real^N$, representing which movies are \kw{action} and which users are \kw{old}, respectively. The task they pose is to predict the gender of a user\footnote{For simplicity, and to follow the example of~\citep{kazemi2017relnn}, we assume binary genders. However, we note that the real world is somewhat more complicated.}, given this information. To do so, they include a third filter, $\bm{\phi} \in \Real^{M}$, of learnable, ``numeric latent properties''. Each layer of their model then convolves each of these filters with the \kw{likes} matrix, then applies a simple linear scale and shift and a sigmoid activation. The result is three new filter vectors that can be used to make predictions or as filters in the next layer. For one layer, the outputs are then
 \begin{align*}
     \vect{v}^a = \sigma\bigg( w_0^a + w^a_1 \mat{L} \vect{a} \bigg), \;
     \vect{v}^o = \sigma\bigg( w_0^o + w^o_1 \vect{o}^T \mat{L} \bigg), \;
     \vect{v}^\phi = \sigma\bigg( w_0^\phi + w^\phi_1 \mat{L} \bm{\phi}  \bigg)
 \end{align*}
 where each $w$ is a scalar. Observe that, for example, the $n^{\text{th}}$ element of $\mat{L}\vect{a}$ counts the number of action movies liked by user $n$. Observe also that $\vect{v}^a, \vect{v}^\phi \in \Real^N$, while $\vect{v}^o \in \Real^M$. So $\vect{v}^a$ and $\vect{v}^\phi$ can be used to make predictions about individual users. Note that the number of parameters in their model grows both with the number of movies and with the number of layers in the network. 

Application of EERL to this example, reduces the 4 parameter model of \citep{hartford2018deep}. Indeed, most discussions and ``all'' experiments in \citet{kazemi2017relnn} assume a single relation. For completeness, we explain EERL in this setting.
Consider the \kw{Likes} matrix, and the \kw{Action} and \kw{Old} filters as tables. We predict the gender of the $n^{\text{th}}$ user, as
\begin{align}\label{eqn:ex_rlnn}
    \vect{g}_n = \sigma \bigg( w^{o}_0 \vect{o}_n + w^{o}_1 \sum_{n'=1}^N \vect{o}_{n'} + w^{L}_0 \sum_{m=1}^M \mat{L}_{n,m} + w^{L}_1 \sum_{n'=1}^M \sum_{m=1}^M \mat{L}_{n',m} + w^{a}_0 \sum_{m=1}^M \vect{a}_m \bigg).
\end{align}

The main difference between their model and ours is that they require per-item parameters (e.g., one parameter per movie), while, as can be seen from \cref{eqn:ex_rlnn}, the number of parameters in our model is independent of the number of instances and so does not grow with the number of users or movies (note that we have the option of adding such features to our model by having unique one-hot features for each \textsc{user} and \textsc{movie}.) As a result, our model can be applied in inductive settings as well. 
One may also draw a parallel between row and column convolution in \citet{kazemi2017relnn} with two out of four pooling operations when we have single relation between two entities. However these operations become insufficient when moving to models of self-relation (\eg 15 parameters for a single self-relation) and does cannot adequately capture the interaction between multiple relations as discussed in our provably optimal linear layer.





\subsection{Relational, Geometric and Equivariant Deep Learning}\label{app:geometric}
\citet{scarselli2009graph} introduced a generic framework that iteratively updates node embeddings using neural networks.  \citet{li2015gated} integrated this iterative process in a recurrent architecture.
\citet{gilmer2017neural} proposed a similar iterative procedure that updates node embeddings and messages between the neighbouring nodes, and show that it subsumes several other deep models for attributed graphs~\citep{duvenaud2015convolutional,schutt2017quantum,li2015gated,battaglia2016interaction,kearnes2016molecular}, 
including spectral methods. Their method is further generalized in \citep{kondor2018covariant} as well as \citep{maron2018invariant}, which is in turn subsumed in our framework.
Spectral methods extend convolution to graphs (and manifolds)
using eigenvectors of the Laplacian as the generalization of the Fourier basis~\citep{bronstein2017geometric,bruna2013spectral}. 
Simplified variations of this approach leads to an intuitive yet non-maximal 
parameter-sharing scheme that is widely used in practice~\citep{kipf2016semi}. 
This type of simplified graph convolution has also been used for relational reasoning with knowledge-graphs~\citep{schlichtkrull2018modeling}. See \cite{hamilton2017representation,battaglia2018relational} for a review of graph neural networks.

An alternative generalization of convolution
is defined for functions over groups, where, for finite groups this takes the form of  parameter-sharing~\citep{cohen2016group,ravanbakhsh_symmetry,shawe1989building}.
Moreover, convolution can be performed in the Fourier domain in this setting, where
irreducible representations of a group become the Fourier bases~\citep{kondor2018generalization}.
Particularly relevant to our work are the models of \citep{kondor2018covariant,maron2018invariant,albooyeh2019incidence} that operate on graph data using an equivariant design.
Equivariant deep networks for a variety of structured data are explored in several other recent works.
~\citep[\eg][]{worrall2017harmonic,cohen2018spherical,kondor2018clebsch,sabour2017dynamic,weiler20183d}; see also~\citep{cohen2016steerable,weiler2017learning,kondor2018covariant,anselmi2019symmetry}. 

\subsection{Parameter-Sharing, Exchangeability and Equivariance}\label{app:exchangeable}
The notion of invariance is also studied under 
the term exchangeability in statistics~\citep{orbanz2015bayesian}; see also \citep{bloemreddy}
for a probabilistic approach to equivariance. 
In graphical models exchangeability is often encoded through
plate notation, where parameter-sharing happens implicitly.
In the AI community, this relationship between the parameter sharing and ``invariance'' properties of the network was noticed in the early days of the Perceptron~\citep{minsky2017perceptrons, shawe1989building,shawe1993symmetries}.
This was rediscovered in \citep{ravanbakhsh_symmetry}, where this relation was leveraged  for equivariant model design. 

\subsubsection{Relation to Deep-Sets of \citet{zaheer_deepsets}}
propose an equivariant model for set data.
Our model reduces to their parameter-tying when we have a single relation $\RR = \{1\}$ with a single entity -- \ie $D = 1$; \ie a set of instances; see also 
\cref{example:recursive}
in 
\cref{sec:simplifications}.
Since we have a single relation, $\W$ matrix has a single block $\W = \W^{1,1}$, indexed by $n^{1}$. The elements of $\n^{1,1} = \n_{1}^{1,1} = \tuple{n_{1_{(1)}}, n_{1_{(2)}}}$ index the elements of this matrix, for entity $1$ (the only entity). There are two types of equality patterns $n_{1_{(1)}} = n_{1_{(2)}}$, and $n_{1_{(1)}} \neq n_{1_{(2)}}$, giving rise to the permutation equivariant layer introduced in~\citep{zaheer_deepsets}.

\subsubsection{Relationship to Exchangeable Tensor Models of \citet{hartford2018deep}}
\citet{hartford2018deep} consider a more general setting of interaction across different sets, such as user-tag-movie relations.
Our model produces their parameter-sharing when we have a 
 single relation $\RRR = \{\RR\}$ with multiple entities $D \geq 1$, where all entities appear only once -- \ie
$\kappa(d) = 1 \forall d \in \RR$. 
Here, again $\W = \W^{1,1}$, and $\n^{1,1}$, the concatenation of row-index $\n^{1}$ and column index $\n^{1}$, identifies an element of this matrix. 
Since each $d \in \RR$ has multiplicity $1$,
 $\n^{1,1}_{d} = \tuple{\n^{1,1}_{d_{(1)}}, \n^{1,1}_{d_{(2)}}} \;\forall d \in [D]$, and therefore $\n^{1,1}_{d}$ can have two class of equality patterns. This gives $2^D$ equivalence classes for $\n^{1,1}$, and therefore $2^D$ unique parameters for a rank $D$ exchangeable tensor.
 
 \subsubsection{Relationship to Equivariant Graph Networks of \citet{maron2018invariant}}\label{app:maron}
\citet{maron2018invariant} further relax the assumption of \cite{hartford2018deep}, and allow for $\kappa(d) \geq 1$. Intuitively, this form of relational data can model the interactions within and between sets; for example interaction within nodes of a graph is captured by an adjacency matrix, corresponding to $D = 1$ and $\RR = \{1,1\}$. This type of parameter-tying is maximal for graphs, and subsumes the parameter-tying approaches derived by simplification of Laplacian-based methods.
 When restricted to a single relation, our model reduces to the model of \citep{maron2018invariant}; however, when we have multiple relations, 
 $\W^{i,j}$ for $j \neq i$, our model captures the interaction between different relations / tensors.

\begin{table}[ht]
\begin{mdframed}[style=MyFrame]
\scalebox{.8}{
\begin{tabular}{c | c}
\begin{minipage}{0.6\linewidth}
\centering
\begin{tabular}{r l}
$\x, \n, \m$ & tuple or column vector \\
& (bold lower-case) \\
\tuple{\cdot, \cdot} & a tuple \\
$\X, \G, \W$ &  tensor, inc. matrix \\
& (bold upper-case) \\
$\XX, \RR$ & set (or multiset) \\
$\gr{S}, \gr{G}$ & group (caligraphic)\\
$\mathcal{D} = [D] = \{1,\ldots,D\}$ & set of entities \\
$N_1,\ldots, N_D$ & number of instances \\
$\RRR \subseteq 2^{\mathcal{D}}$ & a set of relations \\
$\RR_i \subseteq \mathcal{D}$ & a relation \\
$\X^{i} = \X^{\RR_i} \in \Re^{N_{d_1}\times \ldots, N_{d_{|\RR_i|}}}$ & data for a relation $\RR_i$\\
$\XX = \{\X^{i} \st \RR_i \in \RRR\}$ & relational data \\
$\vec{\X^{i}}$ & vectorization of $\X^i$ \\
\end{tabular}
\end{minipage}
& 
\begin{minipage}{0.6\linewidth}
\centering
\begin{tabular}{r l}
$\vec{\XX} $ & \tuple{\vec{\X^{1}}, \ldots, \vec{\X^{|\RRR|}}} \\
$N^{i} = \prod_{d \in \RR_i} N_d$ & length of $\vec{\X^{i}}$ \\
$N = \sum_{\RR_i \in \RRR} N^{i}$ & length of $\vec{\XX}$ \\
$\W \in \Re^{N \times N}$ & parameter matrix\\
$\W^{i,j} \in \Re^{N^i \times N^j}$ & (i,j)$^{th}$ block of $\W$\\
$\n^{i} = \tuple{n^i_{d_1}, \ldots, n^i_{d_{|\RR_i|}}}$ & index for $\vec{\X^i}$ \\ 
& and for rows of $\W^{i,j}$ \\
$\n = \tuple{\n^1,\ldots,\n^{|\RRR|}}$ & index for $\vec{\XX}$ \\ 
& and rows/columns of $\W$\\
$\grn{S}{M}$ & symmetric group $M$\\
\grn{G}{\XX} & group of $N \times N$ ``legal'' \\ 
& permutations  of $\vec{\XX}$ \\
& \\
\end{tabular}
\end{minipage}
\end{tabular}
}
\end{mdframed}
\caption{Summary of Notation}
\label{table:notation}
\end{table}

\section{Multiset Relations}\label{sec:multisets}
Because we allow a relation $\RR$ to contain the same entities multiple times, we formally define a multiset as a tuple $ \multiset{R} = \tuple{{\set{R}}, \kappa}$, where ${\set{R}}$ is a set, and $\kappa: \set{R} \to \Nat$ maps elements of ${\set{R}}$ to their multiset counts. We will call ${\set{R}}$ the \textit{elements} of the multiset $\multiset{R} = \tuple{{\set{R}}, \kappa}$, and $\kappa(d)$ the \textit{count} of element $d$. 
We define the union and intersection of two multisets $\RR_1$ and $\RR_2$ as
$\multiset{R}_1 \cup \multiset{R}_2 = \tuple{{\set{R}}_1 \cup {\set{R}}_2, \kappa_1 + \kappa_2}$  and
$\multiset{R}_1 \cap \multiset{R}_2 = \tuple{{\set{R}}_1 \cap {\set{R}}_2, \min\{\kappa_1, \kappa_2 \}}$.
In general, we may also refer to a multiset using typical set notation (\eg $\RR = \{ d_1, d_1, d_2 \}$). We will use bracketed superscripts to distinguish distinct but equal members of any multiset (\eg $\RR = \{ d_1, d_1, d_2 \} = \{ d_1^{(1)}, d_1^{(2)}, d_2^{(1)} \}$). The ordering of equal members is specified by context or arbitrarily. The size of a multiset $\multiset{R}$ accounts for multiplicities: $|\multiset{R}| = \sum_{d \in \set{R}} \kappa(d)$.

\section{Number of free parameters}\label{sec:num_params} For the multiset relations $ \multiset{R_i} = \tuple{{\set{R_i}}, \kappa}$ and $ \multiset{R_j} = \tuple{{\set{R_j}}, \kappa}$, recall that two parameters $\W^{i,j}_{\n^{i}, \n^{j}}$ and $\W^{i,j}_{{\m^{i}}, {\m^{j}}}$ are tied together if $\n^{i,j}$, the concatenation of $\n^{i}$ with $\n^{j}$, is in the same equivalence class as $\m^{i,j}$. We partition each $\n_d^{i,j}$ into sub-partitions $\set{P}(\n^{i,j}_d) \defeq \{ \set{P}_1, \ldots, \set{P}_L\}$ of indices whose values are equal, and consider $\n^{i,j}$ and $\m^{i,j}$ to be equivalent if their partitions are the same for all $d$:
\begin{align}
\n^{i,j} {\equiv} \m^{i,j}  \Leftrightarrow \set{P}(\n^{i,j}_d) = \set{P}(\m^{i,j}_d) \; 
\forall d \in \RR_i \cup \RR_j
\end{align}
See \ref{sec:parameter_tying} for details. This means that the total number of \textit{free} parameters in $\W^{i,j}$ is the product of the number of possible different partitionings for each unique entity $d \in \RR_i \cup  \RR_j$:
\begin{align}
|\w^{i,j}| = \prod_{d \in \RR_i \cup  \RR_j} b(\kappa^i(d) + \kappa^j(d))
\end{align}
where $\w^{i,j}$ is the free parameter vector associated with $\W^{i,j}$, and $b(\kappa)$ is the $\kappa^{th}$ Bell number, which counts the possible partitionings of a set of size $\kappa$.

\begin{example}\label{eg:minimal_1}[\cref{fig:parameter_example}(a)]
Consider again the simplified version of the relational structure of \cref{fig:er}, restricted to the three relations $\RR_1 = \{1,2\}$, self-relation $\RR_2 = \{2,2\}$, and $R_3 = \{1,3\}$ with $N_1 = 5$ \kw{students}, $N_2 = 4$ \kw{courses}, and $N_3=3$ \kw{professors}. 
We use tuple $\n^{1} = \tuple{n^1_1, n^1_2} \in [N_1] \times [N_2]$ to index the rows and columns of $\W^{1,1}$. We also use $\n^{1}$ to index the rows of $\W^{1,2}$, and use $\n^{2} = \tuple{n^2_{2^{(1)}}, n^2_{2^{(2)}}} \in [N_2] \times [N_2]$ to index its columns. Other blocks are indexed similarly. The elements of $\W^{1,1}$ take $b(2)b(2) = 4$ different values, depending on whether or not $n^1_1 = {n^1}'_1$ and $n^1_2 = {n^1}'_2$, for row index $\n^{1}$ and column index ${\n^{1}}'$ (where $b(k)$ is the k-th Bell number). The elements of $\W^{1,2}$ take $b(1)b(3) = 5$ different values: the index $n^1_1$ can only be partitioned in a single way ($b(1) = 1$). However index $n^1_2$ and indices $n^2_{2^{(1)}}$ and $n^2_{2^{(2)}}$ all index into the \kw{courses} table, and so can each potentially refer to the same course. We thus have a unique parameter for each possible combination of equalities between these three items, giving us a factor of $b(3) = 5$ different parameter values; see \cref{fig:parameter_example}(a), $\W^{1,1}$ is the upper left block, and $\W^{1,2}$ is the block to its right. 
The center block of \cref{fig:parameter_example}(a), $\W^{2,2}$ produces the effect of $\RR_2 = \{2,2\}$ on itself. Here, all four index values could refer to the same course, and so there are $b(4) = 15$ different parameters.
\end{example}

\section{Bias Parameters}\label{sec:bias}
For full generality, our definition of EERL could also include bias terms without affecting its exchangeability properties. We exclude these in the statements of our main theorems for the sake of simplicity, but discuss their inclusion here for completeness. For each relation ${\RR_i = \{d_1, ..., d_{|\RR_i|} \} }$, we define a bias tensor $\B^i \in \Real^{N_{d_1} \times ... \times N_{d_{|\RR_i|}}}$. The elements of $\B^i$ are tied together in a manner similar to the tying of elements in each $\W^{i,j}$: Two elements $\B^i_{\n^i}$ and $\B^i_{\m^i}$ are tied together iff ${\n^i \equiv \n^j}$, using the definition of equivalence from \cref{sec:parameter_tying}. Thus, we have a vector of additional free parameters $\b^i$ for each relation $\RR_i$, where
\begin{align}
    |\b^i| = \prod_{d \in {\RR_i}} b \big(\kappa^i(d) \big).
\end{align}

Consistent with our previous notation, we define $\BB = \{ \B^i \st \RR_i \in \RRR \}$, and $\vec{\BB} = \tuple{\vec{\B^1}, ..., \vec{\B^{|\RRR|}}}$. Then an EERL with bias terms is given by 
\begin{align}
    \vec{\Y} = \sigma \big( \W \vec{\XX} + \vec{\BB} \big) .
\end{align}
The following Claim asserts that we can add this bias term without affecting the desired properties of the EERL.
\begin{claim}\label{claim:bias}
If $\sigma \big( \W \vec{\XX}\big)$ is an EERL, then ${\sigma \big( \W \vec{\XX} + \vec{\BB} \big)}$ is an EERL. 
\end{claim}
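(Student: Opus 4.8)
The plan rests on two structural facts. First, because $\sigma$ is applied coordinate-wise, it commutes with every permutation matrix: $\G\prm{\XX}\sigma(\vect{z}) = \sigma(\G\prm{\XX}\vect{z})$ for all $\vect{z}$ and all $N\times N$ permutations $\G\prm{\XX}$. Second, the tied bias $\vec{\BB}$ is invariant under the legal group, i.e. $\G\prm{\XX}\vec{\BB} = \vec{\BB}$ for every $\G\prm{\XX}\in\grn{G}{\XX}$. To see this, note that a legal permutation acts within each block $\B^i$ by relabelling the instances of each entity $d\in\RR_i$ through the bijection $\G^d$; applying a bijection coordinate-wise preserves the equality pattern of the index tuple, so $\n^i$ and its image have the same partition and are therefore tied to the same bias value. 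Hence the permutation only moves each bias entry to another entry carrying the identical value, leaving $\vec{\BB}$ fixed. I will also use that the unbiased layer being an EERL means, for $\G\prm{\XX}\in\grn{G}{\XX}$, the linear map itself commutes, $\W\G\prm{\XX} = \G\prm{\XX}\W$, which follows by stripping the coordinate-wise $\sigma$ from the equivariance identity of \cref{def:layer}.

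For the forward ($\Leftarrow$) direction, take any $\G\prm{\XX}\in\grn{G}{\XX}$. Using $\W\G\prm{\XX}=\G\prm{\XX}\W$ and $\G\prm{\XX}\vec{\BB}=\vec{\BB}$, the pre-activation transforms as $\W\G\prm{\XX}\vec{\XX}+\vec{\BB} = \G\prm{\XX}\W\vec{\XX}+\G\prm{\XX}\vec{\BB} = \G\prm{\XX}(\W\vec{\XX}+\vec{\BB})$; applying $\sigma$ and commuting it past the permutation yields $\sigma(\W\G\prm{\XX}\vec{\XX}+\vec{\BB}) = \G\prm{\XX}\sigma(\W\vec{\XX}+\vec{\BB})$ for all $\XX$. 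Thus the biased layer is equivariant to every legal permutation.

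The reverse ($\Rightarrow$) direction is the crux, since \cref{def:layer} also demands that the bias introduce \emph{no} new symmetries. Suppose $\G\prm{\XX}$ is any permutation for which $\sigma(\W\G\prm{\XX}\vec{\XX}+\vec{\BB}) = \G\prm{\XX}\sigma(\W\vec{\XX}+\vec{\BB})$ holds for all $\XX$. Commuting $\sigma$ past $\G\prm{\XX}$ on the right rewrites this as $\sigma(\W\G\prm{\XX}\vec{\XX}+\vec{\BB}) = \sigma(\G\prm{\XX}\W\vec{\XX}+\G\prm{\XX}\vec{\BB})$ for all $\XX$. Stripping the coordinate-wise $\sigma$ gives the affine identity $\W\G\prm{\XX}\vec{\XX}+\vec{\BB} = \G\prm{\XX}\W\vec{\XX}+\G\prm{\XX}\vec{\BB}$ for all $\XX$; matching the constant part (set $\vec{\XX}=\vect{0}$) gives $\vec{\BB}=\G\prm{\XX}\vec{\BB}$, while matching the linear part gives $\W\G\prm{\XX}=\G\prm{\XX}\W$. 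The latter means the \emph{unbiased} layer is equivariant to $\G\prm{\XX}$, so by the maximality built into the unbiased layer being an EERL (the only-if direction of \cref{def:layer}), $\G\prm{\XX}\in\grn{G}{\XX}$. Combining both directions establishes the biconditional of \cref{def:layer} for $\sigma(\W\vec{\XX}+\vec{\BB})$, so it is an EERL.

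The main obstacle is precisely the justification for stripping $\sigma$ in the reverse direction: one must rule out that a non-injective nonlinearity combined with this particular bias accidentally enlarges the symmetry group. I would discharge this with the coordinate-wise observation that, for standard activations (strictly monotone, or piecewise-linear such as ReLU), two affine maps $A_1,A_2$ satisfying $\sigma(A_1(\vec{\XX}))=\sigma(A_2(\vec{\XX}))$ for all $\vec{\XX}\in\Real^{N}$ must coincide, $A_1=A_2$; once this is in place the argument collapses to the already-established commutation property of the unbiased EERL and the $\grn{G}{\XX}$-invariance of the tied bias.
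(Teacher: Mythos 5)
Your proof is correct, and its forward direction coincides with the paper's: legal permutations commute with $\W$ (obtained by stripping $\sigma$ from the unbiased equivariance, or directly from \cref{lem:inter_matrix}), and the tied bias is fixed by legal permutations because relabelling instances preserves the equality pattern of each index tuple. The genuine difference is in the hard direction. The paper reduces \cref{claim:bias} to the standalone biconditional $\G\prm{\XX}\vec{\BB}=\vec{\BB} \Leftrightarrow \G\prm{\XX}\in\grn{G}{\XX}$ and then proves its ``only if'' half by a direct structural argument: a permutation fixing $\vec{\BB}$ cannot mix blocks (bias values differ across relations), must act within each block in a way that respects the partitions $\set{P}(\n^i_d)$, and is therefore forced into the form $\bigoplus_{\RR\in\RRR}\bigotimes_{d\in\RR}\G^d$ --- essentially re-running the machinery behind \cref{lem:inter_matrix} on the bias tensors. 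You avoid that work entirely: after stripping $\sigma$, you match the constant part (evaluating at $\vec{\XX}=\mathbf{0}$) and the linear part separately, which yields $\G\prm{\XX}\vec{\BB}=\vec{\BB}$ and $\W\G\prm{\XX}=\G\prm{\XX}\W$, and then you invoke the maximality (the ``only if'' clause of \cref{def:layer}) of the \emph{unbiased} EERL to conclude $\G\prm{\XX}\in\grn{G}{\XX}$. Your route is shorter and reuses the already-established \cref{thm:erl} where the paper re-derives group structure from scratch; what the paper's route buys is the stronger standalone fact that the tied bias pattern alone is discriminative enough to characterize $\grn{G}{\XX}$, a fact your argument never needs. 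One caveat: both proofs must strip $\sigma$, which is licensed by the strict monotonicity the paper itself assumes in the proof of \cref{thm:erl}; however, your fallback remark that non-injective piecewise-linear activations such as ReLU also permit stripping is false in general (two affine maps with distinct strictly negative constant terms and zero linear part produce identical ReLU outputs), so you should drop that aside and rely on injectivity of $\sigma$ as the paper does.
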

The proof (found in \cref{proof:bias}) argues that, since $\sigma \big( \W \vec{\XX}\big)$ is an EERL, we just need to show that $\G^\XX \vec{\BB} = \vec{\BB}$ iff $\G^\XX \in \grn{G}{\XX}$, which holds due to the tying of patterns in each $\B^i$.

\section{Simplifications for Models without Self-Relations}\label{sec:simplifications}
In the special case that the multi relations $\multiset{R}_i$ and $\multiset{R}_j$ are sets ---\ie have no self-relations--- then the parameter tying scheme of \cref{sec:parameter_tying} can be simplified considerably. In this section we address some nice properties of this special setting.

\subsection{Efficient Implementation Using Subset-Pooling}\label{sec:pooling}
Due to the particular structure of $\W$ when all relations contain only unique entities, the operation $\W \vec{\XX}$ in the EERL can be implemented
using (sum/mean) pooling operations over the tensors $\X^\RR$ for $\RR \in \RRR$, without any need for vectorization, or for storing $\W$ directly.

For $\X^i \in \Re^{N_{d_1} \times \ldots \times N_{d_{|\RR|}}}$ and $\set{S} \subseteq \RR_i$, 
let $\pool(\X^i, \set{S})$ be the summation of the tensor $\X^i$
over the dimensions specified by $\set{S}$. That is, $\pool(\X^i, \set{S}) \in \Re^{N_{d'_1} \times \ldots \times N_{d'_{|\RR_i| - |\set{S}|}}}$ where $\RR_i \backslash \set{S} = \{d'_{1}, \ldots, d'_{|\RR| - |\set{S}|} \}$. Then we can write element $\n^i$ in the $i$-th block of $\W \vec{\XX}$ as
\begin{align}\label{eqn:pooling}
(\W \vec{\XX})_{(i,\n^i)} &= \sum_{\RR_j \in \RRR} \sum_{\set{S} \subseteq \RR_i \cap \RR_j} w^{i,j}_{\set{S}} \pool(\X^j, \set{S})_{\n^i_{\RR_i \backslash \set{S}}} 
\end{align}
where $\n^i_{\RR_i \backslash \set{S}}$ is the restriction of $\n^i$ to only elements indexing entities in $\RR_i \backslash \set{S}$. 
This formulation lends itself to a practical, efficient implementation where we simply compute each $\pool(\X^i, \set{S})$ term and broadcast-add them back into a tensor of appropriate dimensions.

\subsection{One-to-One and One-to-Many Relations}\label{sec:one_to_many}
In the special case of a one-to-one or one-to-many relations (\eg in \cref{fig:er}, one \kw{Professor} may teach many \kw{Courses}, but each \kw{Course} has only one \kw{Professor}), we may further reduce the number of parameters due to redundancies. Suppose $\RR_i \in \RRR$ is some relation, and entity $d \in \RR_i$ is in a one-to- relation with the remaining entities of $\RR_i$. Consider the 1D sub-array of $\XX^{\RR_i}$ obtained by varying the value of $\n^{i}_d$ while holding the remaining values $\n^{i}_{\RR_i \backslash \{d\}}$ fixed. This sub-array contains just a single non-zero entry. According to the tying scheme described in \cref{sec:parameter_tying}, the parameter block $\W^{i,i}$ will contain unique parameter values $w_{\RR_i}$ and $w_{\RR_i\backslash \{d\}}$. Intuitively however, these two parameters capture exactly the same information, since the sub-array obtained by fixing the values of $\RR_i \backslash \{d\}$ contains exactly the same data as the sub-array obtained by fixing the values of $\RR_i$ (\ie the same single value). More concretely, to use the notation of \cref{sec:pooling}, we have $\pool(\X^{i}, \RR_i \backslash \{d\})_{\n^i_{\{d\}}} = \pool(\X^{i}, \RR_i)$ in \cref{eqn:pooling}, and so we may tie $w^{i,i}_{\RR_i}$ and $w^{i,i}_{\RR_i \backslash \{d\}}$. 

 In fact, we can reduce the number of free parameters in the case of self-relations (\ie relations with non-unique entities) as well in a similar manner. 
 
\subsection{Recursive Definition of the Weight Matrix}
We are able to describe the form of the parameter matrix $\W^{i,j}$ concisely in a recursive fashion, using Kronecker products: For any $N \in \Nat$, let $\ones_N \in \Real^{N \times N}$ be the $N \times N$ matrix of all ones, and $\eye_N$ the $N \times N$ identity matrix. Given any set of (unique) entities $\set{S} = \{d_1, ..., d_{|\set{S}|} \} \subseteq \{1,..,D\}$, for $k = 1, ..., |\set{S}|$, recursively define the sets 
\begin{align}
    \WW^{\set{S}}_k = \bigg\{ \W \otimes \ones_{N_{d_k}} + \V \otimes \eye_{N_{d_k}} \st \W,\V \in \WW^{\set{S}}_{k-1} \bigg\} ,
\end{align}
with the base case of $\WW^{\set{S}}_0 = \Real$. Then for each block $\W^{i,j}$ of \cref{eqn:param_blocks} we simply have $\W^{i,j} \in \WW^{\RR_i \cap \RR_j}_{|\RR_i \cap \RR_j|}$. 

Writing the blocks of the matrix \cref{eqn:param_blocks} in this way makes it clear why block $\W^{i,j}$ contains $2^{|\RR_i \cap \RR_j|}$ unique parameter values in the case of distinct entities: at each level of the recursive definition we are doubling the total number of parameters by including terms from two elements from the level below. It also makes it clear that the parameter matrix for a rank-$k$ tensor is built from copies of parameter matrices for rank-$(k-1)$ tensors. 
\begin{example}\label{example:recursive}
 In the simple case where we have just one relation and one entity, $\RR = \{d\}$ and the parameter matrix is an element of ${\set{W}^\RR_1 = \{ w \otimes \ones_{N_d} + v \otimes \eye_{N_d} \st w,v \in \Real\}}$, which matches the parameter tying scheme of~\cite{zaheer_deepsets}. If instead we have a single relation with two distinct entities $\RR = \{ d_1, d_2 \}$, then the parameter matrix is an element of ${\set{W}^\RR_2 = \{ \W \otimes \ones_{N_{d_2}} + \V \otimes \eye_{N_{d_2}} \st \W,\V \in \set{W}^R_1\}}$, which matches the tying scheme of \cite{hartford2018deep}. 
\end{example}

\section{Using Multiple Channels}\label{sec:multi_channels}

Equivariance is maintained by composition of equivariant functions. This allows us to stack EERLs to build ``deep'' models that operate on relational databases. Using multiple input ($K$) and output ($K'$) channels is also possible by replacing the parameter matrix $\W \in \Re^{N \times N}$, with the parameter tensor $\W \in \Re^{K \times N \times N \times K'}$; while $K \times K'$ copies have the same parameter-tying pattern ---\ie there is no parameter-sharing ``across'' channels. The single-channel matrix-vector product in $\sigma(\W \vec{\XX})$ where $(\W \vec{\XX})_{\n}= \sum_{\n'} \W_{\n,\n'} \vec{\XX}_{\n'}$ is now replaced with contraction of two tensors $(\W \vec{\XX})_{\n,k'} = \sum_{\n',k \in [K]} \W_{k, \n,\n', k'} \vec{\XX}_{\n',k}$, for $k' \in [K']$. 

\section{Proofs}\label{app:proofs}

Observe that for any index tuple $\n$, we can express $\set{P}(\n_d)$ (\cref{sec:parameter_tying}) as

\begin{align}
    \set{P}(\n_d) = \Bigg\{ \bigg\{ d_{(k)}, d_{(l)} \; \forall k,l \st n_{d_{(k)}} = n_{d_{(l)}}  \bigg\}  \;\Big|\; d_{(k)} = d_{(l)} = d \Bigg\} .
\end{align}
We will make use of this formulation in the proofs below. 

\subsection{Proof of \cref{claim:bias}}\label{proof:bias}
\begin{proof}
We want to show that 
\begin{align}
    \G^\XX \sigma \big( \W \vec{\XX} + \vec{\BB} \big) = \sigma \big( \W \G^\XX \vec{\XX} + \vec{\BB} \big)
\end{align}
iff $\G^\XX \in \grn{G}{\XX}$. Since $\sigma \big( \W \vec{\XX}\big)$ is an EERL, this is equivalent to showing 
\begin{align}
    \G^\XX \vec{\BB} = \vec{\BB} \iff \G^\XX \in \grn{G}{\XX}. 
\end{align}
($\Longleftarrow$) Suppose $\G^\XX \in \grn{G}{\XX}$, with $\grn{G}{\XX}$ defined as in \cref{claim:perm}. Fix some relation $\multiset{\RR_i}$ and consider the $i$-th block of $ \G^\XX \vec{\BB}$:
\begin{align}
    \big( \G^\XX \vec{\BB} \big)_{(i, \n^i)} &= \big( \K^i \vec{\B^i} \big)_{\n^i} \\
    &= \sum_{{\n^i}'} \K^i_{\n^i, {\n^i}'} \vec{\B^i}_{{\n^i}'} \\
    &= \vec{\B^i}_{{\n^i}^*}, 
\end{align}
Where $\G^\XX = \diag( \K^1, ..., \K^{|\RRR|} )$, and ${\n^i}^*$ is the unique index such that $\K^i_{\n^i, {\n^i}^*} = 1$. As above, let $\n^i_d$ be the restriction of $\n^i$ to elements indexing entity $d$. Then we want to show that $\set{P}(\n^i_d) = \set{P}(\n^{i^*}_d)$ for all $d \in \RR_i$. Now, since $\G^\XX \in \grn{G}{\XX}$ and $\K^i_{\n^i, {\n^i}^*} = 1$ we have 
\begin{align}
    \G^d_{n^i_{d_{(k)}}, n^{i^*}_{d_{(k)}} } = 1
\end{align}
for all $d \in \RR_i$ and all $k$. That is $g^d \big(n^{i^*}_{d_{(k)}} \big) = n^i_{d_{(k)}}$. Consider $\set{S} \in \set{P}(\n^i_d)$. we have 
\begin{align}
    \set{S} &= \{ d_{(k)}, d_{(l)} \forall k,l \st n^i_{d_{(k)}} = n^i_{d_{(l)}} \} \\
    &= \{ d_{(k)}, d_{(l)} \forall k,l \st {g^d}^{-1}\big(n^i_{d_{(k)}}\big) = {g^d}^{-1}\big(n^i_{d_{(l)}} \big) \} \\
    &= \{ d_{(k)}, d_{(l)} \forall k,l \st n^{i^*}_{d_{(k)}} = n^{i^*}_{d_{(l)}} \}.
\end{align}
So $\set{P}(\n^i_d) \subseteq \set{P}(\n^{i^*}_d)$. A similar argument has $\set{P}(\n^i_d) \supseteq \set{P}(\n^{i^*}_d)$. Thus, we have $\vec{\B^i}_{{\n^i}^*} = \vec{\B^i}_{{\n^i}}$, which completes the first direction. 
\\
\\
$(\Longrightarrow)$ Let $\G^\XX \vec{\BB} = \vec{\BB}$. First, suppose for the sake of contradiction that $\G\prm{\XX} \not\in \bigoplus_{\RR \in \RRR} \grn{S}{N_{\RR}}$ and consider dividing the rows and columns of $\G\prm{\XX}$ into blocks that correspond to each relation $\RR_i$. Then since $\G\prm{\XX} \not\in \bigoplus_{\RR \in \RRR} \grn{S}{N_{\RR}}$, there exist $\RR_i, \RR_j \in \RRR$, with $i \neq j$ and $\n^i \in [N_{d^i_1}] \times ... \times [N_{d^i_{|\RR_i|}}]$ and $\n^j \in [N_{d^j_1}] \times ... \times [N_{d^j_{|\RR_j|}}]$ such that $\G^\XX$ maps $(i, \n^i)$ to $(j, \n^j)$. That is  $g^\XX\big( (i, \n^i) \big) = (j, \n^j)$ and thus $\G^\XX_{(j, \n^j), (i, \n^i)} = 1$. So 
\begin{align}
\big( \G\prm{\XX} \vec{\BB} \big)_{(j, \n^j)} &= \sum_{k}  \sum_{\n^k}  \G\prm{\XX}_{(j, \n^j), (k, \n^k)}  \vec{\BB}_{(k, \n^k)} \\
&= \vec{\BB}_{(i,\n^i)} \\
&= \vec{\B^i}_{\n^i} \\
&\not= \vec{\B^j}_{\n^j},
\end{align}
by the definition of $\BB$. And so $\G\prm{\XX} \vec{\BB} \neq \vec{\BB} $.
\\
\\
Next, suppose $\G\prm{\XX} \in \bigoplus_{\RR \in \RRR} \grn{S}{N_\RR}$. Then for all $\RR_i$, there exist $\K^i \in \grn{S}{N^i}$ such that $\G^\XX = \diag(\K^1, ..., \K^{|\RR_i|})$. For any $\n^i$ we have  
\begin{align}
    \vec{\B^i}_{\n^i} &= \big(\K^i \vec{\B^i}\big)_{\n^i} \\
    &= \sum_{{\n^i}'} \K^i_{\n^i, {\n^i}'} \vec{\B^i}_{{\n^i}'} \\
    &= \vec{\B^i}_{\n^{i^*}}
\end{align}
Where $\n^{i^*}$ is the unique index such that $\K^i_{\n^{i}, \n^{i^*}} = 1$. That is $k^i$ maps $\n^{i^*}$ to $\n^i$. Then by the definition of $\B^i$ we have 
\begin{align}\label{eqn:pipi}
    \set{P}(\n^{i^*}_d) &= \set{P}(\n^i_d) \nonumber\\
    &= \set{P}\big( (k^i(\n^{i^*}))_d \big)
\end{align}
for all $d \in \RR^i$. \cref{eqn:pipi} says that for each $d$ the action of $\K^i$ on elements of $\n^{i^*}$ is determined only by the values of those elements, not by the values of elements indexing other entities, and so $\K^i \in \bigotimes_{d \in \RR_i} \grn{S}{N_d}$. But \cref{eqn:pipi} also says that for all $k,l$
\begin{align}
    n^{i^*}_{d_{(k)}} = n^{i^*}_{d_{(l)}}  \iff \big(k^i(\n^{i^*})\big)_{d_{(k)}} = \big(k^i(\n^{i^*})\big)_{d_{(l)}} ,
\end{align}
which says that the action of $\K^i$ is the same across any duplications of $d$ (\ie $d_{(k)}$ and $d_{(l)}$), and so $\K^i = \bigotimes_{d \in \RR_i}  \G^d$, for some fixed $\G^d$, and therefore $\G^\XX \in \grn{G}{\XX}$.

\end{proof}

\subsection{\cref{lem:inter_matrix} and Proof}\label{proof:inter_matrix}


To prove our main results about the optimality of EERLs we require the following Lemma. 

\begin{lemma}\label{lem:inter_matrix}
For any permutation matrices $\K^i \in \grn{S}{N^i}$ and $\K^j \in \grn{S}{N^j}$ we have 
\begin{align*}
\K^{i} \W^{i, j} = \W^{i, j} \K^{j} \; \Leftrightarrow \; \K^{i} = \bigotimes_{d \in \multiset{R}_i} \G^d \text{ and } \K^{j} = \bigotimes_{d \in \multiset{R}_j} \G^d \quad \G^d \in \grn{S}{N_d}
\end{align*}
for constrained $\W^{i,j}$ as define above. 
That is $\K^i$ and $\K^j$ should separately permute the instances of each entity in the multisets $\multiset{R}_i$ and $\multiset{R}_j$, applying the same permutation to any duplicated entities, as well as to any entities common to both $\multiset{R}_i$ and $\multiset{R}_j$.
\end{lemma}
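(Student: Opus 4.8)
The plan is to translate the matrix identity into a purely combinatorial condition on the underlying permutations, and then characterise exactly which permutations satisfy it. Writing $\pi^i$ and $\pi^j$ for the permutations of the multi-index sets $[N^i]$ and $[N^j]$ realised by $\K^i$ and $\K^j$, and using that permutation matrices are orthogonal, a direct index computation gives $\K^i \W^{i,j} = \W^{i,j}\K^j$ if and only if $\W^{i,j}_{\n^i,\n^j} = \W^{i,j}_{\pi^i(\n^i),\, \pi^j(\n^j)}$ for every pair $\tuple{\n^i,\n^j}$; that is, $\W^{i,j}$ is invariant under simultaneously permuting rows by $\pi^i$ and columns by $\pi^j$. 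Assuming the free parameters of $\W^{i,j}$ take pairwise distinct values (the generic case relevant to maximality), the entry $\W^{i,j}_{\n^i,\n^j}$ faithfully encodes the per-entity equality pattern of the concatenation $\n^{i,j}$, so this invariance is equivalent to the statement that $(\pi^i,\pi^j)$ \emph{preserves every partition} $\set{P}(\n^{i,j}_d)$, $d \in \RR_i \cup \RR_j$. I would prove both directions in this equivalent language.

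The $(\Leftarrow)$ direction is immediate. If $\K^i = \bigotimes_{d\in\RR_i}\G^d$ and $\K^j = \bigotimes_{d\in\RR_j}\G^d$, then $\pi^i$ applies the bijection $g^d$ to every coordinate of $\n^i$ indexing entity $d$, and $\pi^j$ applies the \emph{same} $g^d$ to the corresponding coordinates of $\n^j$. Since a bijection preserves equalities, $n_{d_{(k)}} = n_{d_{(l)}} \Leftrightarrow g^d(n_{d_{(k)}}) = g^d(n_{d_{(l)}})$, and because one and the same $g^d$ governs all occurrences of $d$ (both the duplicated slots within a relation and the slots shared between $\RR_i$ and $\RR_j$), each partition $\set{P}(\n^{i,j}_d)$ is left unchanged. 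Hence $\W^{i,j}$ is invariant and the matrices commute.

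The $(\Rightarrow)$ direction is the crux and I expect it to be the main obstacle, since from a single invariance I must recover both the \emph{product (Kronecker) structure} of $\pi^i,\pi^j$ and the \emph{consistency} of their per-entity pieces. The tool is to feed $\W^{i,j}$ carefully chosen ``probe'' indices. To force $\pi^i$ to act coordinate-wise I use indices whose instance values are pairwise distinct except in one chosen slot: preservation of the resulting partition pins the image of that slot to depend only on its input value, defining a permutation $g^s\in\grn{S}{N_d}$ per slot $s$ and hence $\pi^i=\bigotimes_s g^s$, and likewise for $\pi^j$. To collapse the slotwise permutations of a common entity $d$ to a single $g^d$, I use diagonal probes (indices whose $d$-slots coincide): preservation of the pattern ``these two $d$-slots are equal'' forces $g^s=g^{s'}$ for any two slots of the same entity, and for $d\in\RR_i\cap\RR_j$ the cross-equalities between a $d$-slot of $\n^i$ and a $d$-slot of $\n^j$ tie the pieces of $\pi^i$ and $\pi^j$ together into one $g^d$. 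The two technical points requiring care are having enough distinct instance values to build the probes (so each relevant $N_d$ must be large enough, which holds in the regime of interest) and the bookkeeping over the multiplicities $\kappa^i(d),\kappa^j(d)$; the genuinely delicate configuration is a self-relation, where the equalities needed to establish the product structure must be drawn from the duplicated slots themselves rather than from a shared entity. With these handled, collecting the slotwise permutations yields $\K^i=\bigotimes_{d\in\RR_i}\G^d$ and $\K^j=\bigotimes_{d\in\RR_j}\G^d$, completing the proof.
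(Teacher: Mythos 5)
Your proposal is correct and follows essentially the same route as the paper's own proof: both reduce the commutation $\K^{i} \W^{i,j} = \W^{i,j} \K^{j}$ to invariance of the entries of $\W^{i,j}$ under simultaneous row/column permutation, identify (via the tying scheme, with free parameters taking distinct values) entry equality with per-entity equality-pattern preservation, and then extract the Kronecker product structure together with consistency of the $\G^d$ across duplicated slots and across entities shared by $\multiset{R}_i$ and $\multiset{R}_j$. The differences are only presentational --- the paper packages the backward direction's bookkeeping in a standalone technical lemma and leaves the genericity of the parameter values implicit, whereas you state that assumption explicitly and organize the forward direction around probe indices, which is a more explicit rendering of the same deduction the paper asserts.
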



To get an intuition for this lemma, consider the special case of $i = j$. 
In this case, the claim is that $\W^{i, i}$ commutes with any permutation matrix that is of the form $\K^{i} = \bigotimes_{d \in \RR_i} \G^d$. This gives us the kind of commutativity we desire for an EERL, at least for the diagonal blocks of $\W$. Equivalently, commuting with $\K^{i}$ means that applying permutation $\K^{i}$ to the rows of $\W^{i, i}$ has the same effect as applying $\K^{i}$ to the columns of $\W^{i, i}$. In the case of $i \neq j$, ensuring that $\K^i$ and $\K^j$ are defined over the same underlying set of permutations, $\{\G^d \in \grn{S}{N_d} \st d \in \RR_i \cup \RR_j \}$, ensures that permuting the rows of $\W^{i, j}$ with $\K^i$ has the same effect as permuting the columns of $\W^{i, j}$ with $\K^j$. 
It is this property that will allow us to show that a network layer defined using such a parameter tying scheme is an EERL. See \cref{fig:parameter_example} for a minimal example, demonstrating this lemma.



We require the following technical Lemma for the proof of \cref{lem:inter_matrix}.
\begin{lemma}\label{lem:intersect}
Let $\RR_i, \RR_j \in \RRR$, and for each $d \in [D]$ let $\G^d \in \grn{S}{N_d}$. If $\G^{d}_{n^{i}_{d_{(k)}}, {n^{i}}'_{d_{(k)}} } = 1$ for all $d_{(k)} \in \RR_i$ with $d_{(k)} = d$, and $\G^{d}_{{n^{j}}'_{d_{(k)}}, n^{j}_{d_{(k)}} } = 1$ for all $d_{(k)} \in \RR_j$ with $d_{(k)} = d$, then for all $d_{(k)} \in \RR_i$ and $d_{(l)} \in \RR_j$ with $d_{(k)} = d_{(l)} = d$, we have ${n^{i}}'_{d_{(k)}} = n^{j}_{d_{(l)}}  \iff n^{i}_{d_{(k)}} = {n^{j}}'_{d_{(l)}}$.
\end{lemma}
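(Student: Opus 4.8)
The plan is to translate the two matrix hypotheses into statements about the underlying permutations and then finish by bijectivity. I would recall that each $\G^d$ is the permutation matrix of a bijection $g^d \in \grn{S}{N_d}$, with the convention already used in the proof of \cref{claim:bias}, namely that $\G^d_{a,b} = 1$ holds exactly when $g^d(b) = a$. Everything then reduces to reading the hypotheses as functional relations.

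Concretely, I would fix an entity $d$ together with occurrences $d_{(k)} \in \RR_i$ and $d_{(l)} \in \RR_j$ satisfying $d_{(k)} = d_{(l)} = d$, and abbreviate the four relevant indices by $a \defeq n^i_{d_{(k)}}$, $a' \defeq {n^i}'_{d_{(k)}}$, $b \defeq n^j_{d_{(l)}}$, and $b' \defeq {n^j}'_{d_{(l)}}$. The first hypothesis, $\G^d_{n^i_{d_{(k)}}, {n^i}'_{d_{(k)}}} = 1$, then reads $g^d(a') = a$, while the second hypothesis, $\G^d_{{n^j}'_{d_{(l)}}, n^j_{d_{(l)}}} = 1$, reads $g^d(b) = b'$. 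The conclusion to establish is simply $a' = b \iff a = b'$, and both directions follow by pushing one side through $g^d$. For the forward direction, $a' = b$ gives $a = g^d(a') = g^d(b) = b'$; for the reverse direction, $a = b'$ gives $g^d(a') = a = b' = g^d(b)$, whence injectivity of $g^d$ yields $a' = b$.

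The main (and essentially only) point requiring care is the bookkeeping of conventions: the two hypotheses are stated with the primed index in opposite roles --- a column index for $\RR_i$ but a row index for $\RR_j$ --- so that $g^d$ sends $a' \mapsto a$ yet $b \mapsto b'$. This asymmetry is exactly what aligns the two equalities around the shared value $g^d(a') = g^d(b)$, and it is why the reverse implication needs injectivity of $g^d$ rather than a direct substitution. Once the permutation-matrix entries are read as statements about the bijection $g^d$, there is no combinatorial or structural obstacle remaining, so I expect the formal write-up to be short.
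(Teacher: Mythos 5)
Your proposal is correct and follows essentially the same route as the paper's own proof: both translate the matrix entries into the functional relations $g^d({n^i}'_{d_{(k)}}) = n^i_{d_{(k)}}$ and $g^d(n^j_{d_{(l)}}) = {n^j}'_{d_{(l)}}$ and then chain the equalities through $g^d$ (the paper handles the reverse direction ``by an identical argument,'' which is exactly your injectivity/inverse-permutation step). Your explicit remark about the asymmetric placement of the primed indices is a nice clarification of a point the paper leaves implicit.
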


\begin{proof}
Suppose $\G^{d}_{n^{i}_{d_{(k)}}, {n^{i}}'_{d_{(k)}} } = 1$ for all $d_{(k)} \in \RR_i$, and $\G^{d}_{{n^{j}}'_{d_{(k)}}, n^{j}_{d_{(k)}} } = 1$ for all $d_{(k)} \in \RR_j$. We prove the forward direction $(\Longrightarrow)$. The backward direction follows from an identical argument. Fix some $d_{(k)} \in \RR_i$ and $d_{(l)} \in \RR_j$ and suppose ${n^{i}}'_{d_{(k)}} = n^{j}_{d_{(l)}}$. By assumption we have $\G^{d}_{n^{i}_{d_{(k)}}, {n^{i}}'_{d_{(k)}} } = 1$ and so 
\begin{align}\label{eqn:map_kk}
    g^d({n^{i}}'_{d_{(k)}}) = n^{i}_{d_{(k)}} .
\end{align}
Similarly, we have $\G^{d}_{{n^{j}}'_{d_{(l)}}, n^{j}_{d_{(l)}} } = 1$ and so
\begin{align}\label{eqn:map_ll}
    g^d(n^{j}_{d_{(l)}}) = {n^{j}}'_{d_{(l)}} .
\end{align}
But ${n^{i}}'_{d_{(k)}} = n^{j}_{d_{(l)}}$ and substituting into \cref{eqn:map_kk} we have
\begin{align}\label{eqn:map_lk}
    g^d(n^{j}_{d_{(l)}}) = n^{i}_{d_{(k)}} .
\end{align}
And combining \cref{eqn:map_ll} and \cref{eqn:map_lk} gives $n^{i}_{d_{(k)}} = {n^{j}}'_{d_{(l)}}$, as desired. 

\end{proof}

We are now equipped to prove our main claims, starting with \cref{lem:inter_matrix}:

\begin{proof}
($\Longleftarrow$) Let $\multiset{\RR_i} = \{ d^i_1, ..., d^i_{|\multiset{\RR_i|}} \}$ and $\multiset{\RR_j} = \{ d^j_1, ..., d^j_{|\multiset{\RR_j|}} \}$ and fix some $\{\G^d \in \grn{S}{N_d} \st d \in \RR_i \cup \RR_j \}$. We index the rows of $\W^{i, j}$, and the rows and columns of $\K^{i}$, with tuples $\n^{i}, {\n^{i}}' \in [N_{d^i_1}] \times ... \times [N_{d^i_{|\RR_i|}}]$. Similarly, the columns of $ \W^{i, j}$, and rows and columns of $\K^{j}$, are indexed with tuples $\n^{j}, {\n^{j}}' \in [N_{d^j_1}] \times ... \times [N_{d^j_{|\RR_j|}}]$. Since $\K^i = \bigotimes_{d \in \RR_i} \G^d$ we have 
\begin{align*}
\K^{i}_{\n^{i}, {\n^{i}}'} = \prod_{d \in \multiset{\RR_i}} \G^{d}_{n^{i}_{d}, {n^{i}}'_{d} } = \prod_{d \in \RR_i} \prod_{k = 1}^{\kappa(d)} \G^{d}_{n^{i}_{d_{(k)}}, {n^{i}}'_{d_{(k)}} } .
\end{align*}
And thus,
\begin{align}\label{eqn:observe}
\K^{i}_{\n^{i}, {\n^{i}}'} = 1 \iff &\G^{d}_{n^{i}_{d_{(k)}}, {n^{i}}'_{d_{(k)}} } = 1, \forall d_{(k)} \in \multiset{\RR_i} \text{ s.t. } d_{(k)} = d . 
\end{align}
The same is true for $\RR_j$. That is
\begin{align}
\K^{j}_{{\n^{j}}', \n^{j}} = 1 \iff \G^{d}_{{n^{j}}'_{d_{(k)}}, n^{j}_{d_{(k)}} } = 1, \forall d_{(k)} \in \multiset{\RR_j} \text{ s.t. } d_{(k)} = d.
\end{align}
Now, fix some $\n^{i}$ and $\n^{j}$. Since $\K^{i}$ is a permutation matrix, and so has only one 1 per row, we have  
\begin{align}
\big( \K^{i} \W^{i, j} \big)_{\n^{i}, \n^{j}} &= \sum_{{\n^{i}}'}  \K^{i}_{\n^{i}, {\n^{i}}'} \, \W^{i, j}_{{\n^{i}}' \n^{j}} \nonumber \\
&= \W^{i, j}_{{\n^{i}}^*, \n^{j}}, \label{eqn:inter_mat1a}
\end{align}
where ${\n^{i}}^*$ is the (unique) element of $[N_{d^i_1}] \times ... \times [N_{d^i_{|R_i|}}]$ which satisfies $\G^{d}_{n^{i}_{d_{(k)}}, {n^{i}}^*_{d_{(k)}} } = 1$ for all $d_{(k)} \in \multiset{\RR_i}$ with $d_{(k)} = d$. Similarly, 
\begin{align}
\big( \W^{i, j} \K^{j} \big)_{\n^{i}, \n^{j}} &= \sum_{{\n^{j}}'} \W^{i, j}_{\n^{i}, {\n^{j}}'} \, \K^{j}_{{\n^{j}}', \n^{j}}  \nonumber \\
&=  \W^{i, j}_{\n^{i}, {\n^{j}}^*} \label{eqn:inter_mat1b}
\end{align}
where ${\n^{j}}^*$ is the (unique) element of $[N_{d^j_1}] \times ... \times [N_{d^j_{|\RR_j|}}]$ which satisfies $\G^{d}_{{n^{j}}^*_{d_{(k)}}, n^{j}_{d_{(k)}} } = 1$ for all $d_{(k)} \in \multiset{\RR_j}$ with $d_{(k)} = d$. 
\\
\\
We want to show that ${\set{P}(\n^{i^*,j}_d) = \set{P}(\n^{i,j^*}_d)}$ for all $d \in \RR_i \cup \RR_j$. Fix $d \in \RR_i \cup \RR_j$ and let $\multiset{S} \in \set{P}(\n^{i^*,j}_d)$. Then $\multiset{S} = \{ d_{(1)}, ..., d_{(K)} \}$, where $d_{(k)} = d$ for all $d_{(k)} \in \multiset{S}$, and for all $d_{(k)}, d_{(l)} \in \multiset{S}$, $n^{i^*,j}_{d_{(k)}} = n^{i^*,j}_{d_{(l)}}$. Then by \cref{lem:intersect} we have $n^{i,j^*}_{d_{(k)}} = n^{i,j^*}_{d_{(l)}}$, and so $\multiset{S} \in \set{P}(\n^{i,j^*}_d)$. So we have $\set{P}(\n^{i^*,j}_d) \subseteq \set{P}(\n^{i,j^*}_d)$, and the other containment follows identically by symmetry. So $\n^{i^*,j} \equiv \n^{i,j^*}$ by our definition in \cref{sec:parameter_tying}, and so  $\W^{i, j}_{{\n^{i}}^*, \n^{j}} = \W^{i, j}_{\n^{i}, {\n^{j}}^*}$ and by (\ref{eqn:inter_mat1a}) and (\ref{eqn:inter_mat1b}) above, $\K^{i} \W^{i, j} = \W^{i, j} \K^{j}$.

($\Longrightarrow$) Suppose $\K^i \W^{i,j} = \W^{i,j} \K^j$. Fix some $\n^i,\n^j$. Let $\n^{i^*}$ be the unique index such that $\K^i_{\n^i, \n^{i^*}} = 1$, and $\n^j$ the unique index such that $\K^j_{\n^{j^*}, \n^{j}} = 1$. Then 
\begin{align}
    \W^{i,j}_{\n^{i^*}, \n^j} &= \sum_{{\n^i}'} \K^i_{\n^i, {\n^i}'} \W^{i,j}_{{\n^i}', \n^j} \nonumber \\
    &= \big( \K^i \W^{i,j} \big)_{\n^i,\n^j} \nonumber \\
    &= \big( \W^{i,j} \K^j \big)_{\n^i,\n^j} \nonumber \\
    &= \sum_{{\n^j}'} \W^{i,j}_{{\n^i}, {\n^j}'} \K^j_{{\n^j}', {\n^j}} \nonumber \\
    &= \W^{i,j}_{\n^i, {\n^j}^*},
\end{align}
and so $\set{P}(\n^{i^*,j}_d) = \set{P}(\n^{i,j^*}_d)$ for all $d \in \RR_i \cup \RR_j$. But this implies that 
\begin{align}\label{eqn:nipi}
    \set{P}(\n^{i^*}_d) &= \set{P}(\n^{i}_d) \nonumber \\
    &= \set{P}\big( (k^i(\n^{i^*}))_d \big). 
\end{align}
\cref{eqn:nipi} says that for each $d$ the action of $\K^i$ on elements of $\n^{i^*}$ is determined only by the values of those elements, not by the values of elements indexing other entities, and so $\K^i \in \bigotimes_{d \in \RR_i} \grn{S}{N_d}$. But \cref{eqn:nipi} also means that for all $k,l$
\begin{align}
    n^{i^*}_{d_{(k)}} = n^{i^*}_{d_{(l)}}  \iff \big(k^i(\n^{i^*})\big)_{d_{(k)}} = \big(k^i(\n^{i^*})\big)_{d_{(l)}} ,
\end{align}
which says that the action of $\K^i$ is the same across any duplications of $d$ (\ie $d_{(k)}$ and $d_{(l)}$), and so $\K^i = \bigotimes_{d \in \RR_i}  \G^d$, for some fixed $\G^d$. Similarly, 
\begin{align}\label{eqn:njpj}
    \set{P}(\n^{j}_d) &= \set{P}(\n^{j^*}_d) \\
    &= \set{P}\big( (k^j(\n^{j}))_d \big), 
\end{align}
which shows that $\K^j = \bigotimes_{d \in \RR_j} {\G^{N_d}}'$. Finally, since $\set{P}(\n^{i^*,j}_d) = \set{P}(\n^{i,j^*}_d)$, we also have 
\begin{align}
    n^{i^*}_{d_{(k)}} = n^{j}_{d_{(l)}} &\iff n^{i}_{d_{(k)}} = n^{j^*}_{d_{(l)}}, 
\end{align}
for all $k,l$, which means 
\begin{align}
    n^{i^*}_{d_{(k)}} = n^{j}_{d_{(l)}} &\iff \big( k^i(\n^{i^*}) \big)_{d_{(k)}} = \big( k^j(\n^{j}) \big)_{d_{(l)}}\label{eqn:ij_equal}. 
\end{align}
\cref{eqn:ij_equal} says that $\K^i$ and $\K^j$ apply the same permutations to all duplications of any entities they have in common, and so $\G^d = {\G^d}'$, which completes the proof. 
\end{proof}

\subsection{Proof of \cref{thm:erl}}\label{proof:erl}
We break the proof into two parts, for the if ($\Rightarrow$) and only if 
($\Leftarrow$) statement.
\subsubsection{Proof of the if  statement ($\Rightarrow$) in \cref{thm:erl}}
\begin{proof}
Let $\G\prm{\XX} \in \gr{S}^N$ and $\gr{G}^\XX$ be defined as in \cref{claim:perm}. We need to show that $\G\prm{\XX} \sigma( \W \vec{\XX}) = \sigma( \W \G\prm{\XX} \vec{\XX})$ iff $\G\prm{\XX} \in \gr{G}^\XX$ for any assignment of values to the tables $\XX$. We prove each direction in turn. \\
\\
$(\Longrightarrow)$ We prove the contrapositive. Suppose $\G\prm{\XX} \not\in \gr{G}^\XX$. We first show that $\G\prm{\XX} \W \neq \W \G\prm{\XX}$ and then that $\G\prm{\XX} \sigma( \W \vec{\XX}) \neq \sigma( \W \G\prm{\XX} \vec{\XX})$ for an appropriate choice of $\XX$. There are three cases. First, suppose $\G\prm{\XX} \not\in \bigoplus_{\RR \in \RRR} \grn{S}{N_{\RR}}$ and consider dividing the rows and columns of $\G\prm{\XX}$ into blocks that correspond to the blocks of $\W$. Then since $\G\prm{\XX} \not\in \bigoplus_{\RR \in \RRR} \grn{S}{N_{\RR}}$, there exist $\RR_i, \RR_j \in \RRR$, with $i \neq j$ 
and $\n^i \in [N_{d^i_1}] \times ... \times [N_{d^i_{|\RR_i|}}]$ and $\n^j \in [N_{d^j_1}] \times ... \times [N_{d^j_{|\RR_j|}}]$ such that $\G^\XX$ maps $(i, \n^i)$ to $(j, \n^j)$. That is  $g^\XX\big( (i, \n^i) \big) = (j, \n^j)$ and thus, $\G^\XX_{(j, \n^j), (i, \n^i)} = 1$. And so 
\begin{align*}
\big( \G\prm{\XX} \W \big)_{(j, \n^j), (i, \n^i)} &= \sum_{k}  \sum_{\n^k}  \G\prm{\XX}_{(j, \n^j), (k, \n^k)}  \W_{(k, \n^k), (i, \n^i)} \\
&= \W^{i,i}_{\n^i,\n^i} ,
\end{align*}
by the definition of $\W$. Similarly, 
\begin{align*}
\big( \W \G\prm{\XX} \big)_{(j, \n^j), (i, \n^i)} &= \sum_{k}  \sum_{\n^k}  \W_{(j, \n^j), (k, \n^k)}  \G\prm{\XX}_{(k, \n^k), (i, \n^i)} \\
&= \W^{j,j}_{\n^j, \n^j} 
\end{align*}
But $\W^{i,i}_{\n^i,\n^i} \neq \W^{j,j}_{\n^j, \n^j}$ since $i \neq j$. And so $\G\prm{\XX} \W \neq \W \G\prm{\XX} $. 
\\
\\
Next, suppose $\G\prm{\XX} \in \bigoplus_{\RR \in \RRR} \grn{S}{N_\RR}$, but $\G\prm{\XX} \not\in \bigoplus_{\RR \in \RRR} \bigotimes_{d \in \RR} \grn{S}{N_d}$ and consider the diagonal blocks of $\G\prm{\XX} \W {\G\prm{\XX}}^T$ that correspond to those of $\W$. If $\G\prm{\XX} \in \bigoplus_{\RR \in \RRR} \grn{S}{N_R}$ then it is block diagonal with blocks corresponding to each $\RR \in \RRR$. But since $\G\prm{\XX} \not\in \bigoplus_{\RR \in \RRR} \bigotimes_{d \in \RR} \grn{S}{N_d}$, there exists some $\RR_i \in \RRR$ such that the $i^\text{th}$ diagonal block of $\G\prm{\XX}$ is not of the form $\bigotimes_{d \in \RR_i} \G^{d}$ for any $\G^{d}$. Then by \cref{lem:inter_matrix} we will have inequality between $\G\prm{\XX} \W {\G\prm{\XX}}^T$ and $\W$ in the $i^\text{th}$ diagonal block. 
\\
\\
Finally, suppose $\G\prm{\XX} \in \bigoplus_{\RR \in \RRR} \bigotimes_{d \in \RR} \grn{S}{N_d}$. Then $\G\prm{\XX} = \K^1 \oplus ... \oplus \K^{|\RRR|}$, where $\K^i \in  \bigotimes_{d \in \RR} \grn{S}{N_d}$ for all $i$. Since $\G\prm{\XX} \not\in \gr{G}^\XX$, there exist $\multiset{\RR_i} , \multiset{\RR_j} \in \RRR$, possibly with $i = j$, and a $d^* \in \RR_i \cap \RR_j$ such that
\begin{align*}
\K^i = \G^{d^i_1} \otimes ... \otimes \G^{d^*_{(k)}} \otimes ... \otimes \G^{d^i_{|\RR_i|}},
\end{align*}
and 
\begin{align*}
\K^j = \G^{d^j_1} \otimes ... \otimes \G^{d^*_{(l)}} \otimes ... \otimes \G^{d^j_{|\RR_j|}} ,
\end{align*}
but $ \G^{d^*_{(k)}} \neq \G^{d^*_{(l)}}$. Since $ \G^{d^*_{(k)}} \neq \G^{d^*_{(l)}}$ there exists $n \in [N_{d^*}]$ with $g^{d^*_{(l)}}(n) \neq g^{d^*_{(l)}}(n)$. Pick some $\n^i$ and $\n^j$ with $n^i_{d^*} = n^j_{d^*} = n$. Let ${\n^i}^*$ be the result of applying $\K^i$ to $\n^i$ and ${\n^j}^*$ the result of applying $\K^j$ to $\n^j$. Then we have 
\begin{align}\label{eqn:prod_1}
\big( \G\prm{\XX} \W \big)_{(i, {\n^i}^*), (j, \n^j)} &= \sum_{k}  \sum_{\n^k}  \G\prm{\XX}_{(i, {\n^i}^*), (k, \n^k)}  \W_{(k, \n^k), (j, \n^j)} \nonumber \\
&= \sum_{\n^k} \K^i_{{\n^i}^*, \n^k} \W^{i,j}_{\n^k, \n^j} \nonumber \\
&= \W^{i,j}_{\n^i, \n^j} ,
\end{align}
and
\begin{align}\label{eqn:prod_2}
\big( \W \G\prm{\XX} \big)_{(i, {\n^i}^*), (j, \n^j)} &= \sum_{k}  \sum_{\n^k}  \W_{(i, {\n^i}^*), (k, \n^k)}  \G\prm{\XX}_{(k, \n^k), (j, \n^j)}\nonumber \\
&= \sum_{\n^k}  \W^{i,j}_{{\n^i}^*,\n^k} \K^j_{\n^k, \n^j} \nonumber \\
&= \W^{i,j}_{{\n^i}^*, {{\n^j}^*}} .
\end{align}
Now, by construction we have $n^i_{d^*} = n^j_{d^*}$, but ${n^i}^*_{d^*} \not= {n^j}^*_{d^*}$. So $\set{P}(\n^{i,j}_{d^*}) \neq \set{P}(\n^{i^*,j^*}_{d^*})$ and therefore $\W^{i,j}_{\n^i, \n^j} \neq \W^{i,j}_{{\n^i}^*, {{\n^j}^*}}$. And so by \cref{eqn:prod_1} and \cref{eqn:prod_2} we have $\G\prm{\XX} \W \neq \W \G\prm{\XX}$.
\\
\\
And so in all three cases $\G\prm{\XX} \W \neq \W \G\prm{\XX}$. Thus, there exists some $\XX$, for which we have $\G\prm{\XX} \W \vec{\XX} \neq \W \G\prm{\XX} \vec{\XX}$. Since $\sigma$ is strictly monotonic, we have $\sigma(\G\prm{\XX} \W \vec{\XX}) \neq \sigma(\W \G\prm{\XX} \vec{\XX})$. And since $\sigma$ is element wise we have $\G\prm{\XX} \sigma(\W \vec{\XX}) \neq \sigma(\W \G\prm{\XX} \vec{\XX})$, which proves the first direction. 
\\
\\
$(\Longleftarrow)$ Suppose $\G\prm{\XX} \in \gr{G}^\XX$. That is, for all $d \in [D]$, let $\G^d \in \grn{S}{N_d}$ be some fixed permutation of $N_d$ objects and let $\G\prm{\XX} = \bigoplus_{\RR \in \RRR} \bigotimes_{d \in \RR} \G^d$. Observe that $\G\prm{\XX}$ is block-diagonal. Each block on the diagonal corresponds to an $\RR \in \RRR$ and is a Kronecker product over the matrices $\G^d$ for each $d \in \RR$. Let $\K^i = \bigotimes_{d \in \RR_i} \G^d$ for each $i \in [|\RRR|]$. That is,  
\begin{align*}
\G\prm{\XX} = 
\begin{bmatrix}
    {\K^{1}} & &  & \text{\huge0} \\
     & {\K^{2}}  &  & \\
     & & \ddots & \\
    \text{\huge0} & &  & {\K^{{|\RRR|}}}
\end{bmatrix} .
\end{align*}
And so the $i,j$-th block of $\G\prm{\XX} \W {\G\prm{\XX}}^T$ is given by:
\begin{align}\label{eqn:kwk}
\bigg( \G\prm{\XX} \W {\G\prm{\XX}}^T \bigg)^{i,j} &= \K^i \W^{i,j} {\K^j}^T \nonumber \\
&= \W^{i,j} .
\end{align}
The equality at \cref{eqn:kwk} follows from \cref{lem:inter_matrix}. Thus, we have $\G\prm{\XX} \W = \W {\G\prm{\XX}}$, and so for all $\XX$, $\G\prm{\XX} \W \vec{\XX} = \W {\G\prm{\XX}}\vec{\XX}$. Finally, since $\sigma$ is applied element-wise, we have 
\begin{align*}
\sigma( \W \G\prm{\XX} \vec{\XX}) &= \sigma( \G\prm{\XX} \W \vec{\XX}) \\
&= \G\prm{\XX} \sigma( \W \vec{\XX})
\end{align*}
Which proves the second direction. And so $\sigma(\W \vec{\XX})$ is an exchangeable relation layer, completing the proof. 
\end{proof}

\subsubsection{Proof of the only if direction ($\Leftarrow$) in \cref{thm:erl}}
The idea is that if $\W$ is not of the form \cref{eqn:param_blocks} then it has some block $\W^{i,j}$ containing two elements whose indices have the same equality pattern, but whose values are different. Based on these indices, we can explicitly construct a permutation which swaps the corresponding elements of these indices. This permutation is in $\gr{G}^\XX$ but it does not commute with $\W$. Now we present a detailed proof.  
\begin{proof}
Let $\G\prm{\XX} \in \gr{S}^N$. For any relation $\RR = \{d_1, ..., d_{|\RR|}  \} \in \RRR$, let $\set{N}^\RR = [N_{d_1}] \times ... \times [N_{d_{|\RR|}}]$ be the set of indices into $\XX^\RR$. If $\W$ is not of the form described in Section \ref{sec:layer} then there exist $i,j \in [|\RRR|]$, with $\n^i, {\n^i}' \in \set{N}^{\RR_i}$ and $\n^j, {\n^j}' \in \set{N}^{\RR_j}$ such that 
\begin{align}\label{eqn:param_unequal}
\W^{i,j}_{\n^i, \n^j} \neq \W^{i,j}_{ {\n^i}', {\n^j}'} 
\end{align}
but 
\begin{align}\label{eqn:inds_equal}
\set{P}(\n^{i,j}_d) = \set{P}(\n^{i',j'}_d), \; \forall d \in \RR_i \cup \RR_j
\end{align}
That is, the pairs $\n^i, \n^j$ and ${\n^i}', {\n^j}'$ have the same equality pattern over their elements, but the entries of $\W^{i,j}$ which correspond to these pairs have differing values, and thus violate the definition of $\W$ in Section \ref{sec:layer}. To show that the layer $\sigma(\W \vec{\XX})$ is not an EERL, we will demonstrate a permutation $\G\prm{\XX} \in \gr{G}^\XX$ for which $\G\prm{\XX}\W \neq \W \G\prm{\XX}$, and thus $\G\prm{\XX} \sigma( \W \vec{\XX}) \neq \sigma( \W \G\prm{\XX} \vec{\XX})$ for some $\XX$. 

Let $\G^{\XX} = \bigoplus_{\RR \in \RRR} \bigotimes_{d \in \RR} \G^d$, with the $\G^d$ defined as follows. For $d \in {\RR_i} \cap {\RR_j}$ and all $k$:
\begin{align*}
\G^d(n) = 
\begin{cases}
{n^i}'_{d_{(k)}} &\quad n = n^i_{d_{(k)}} \\
n^i_{d_{(k)}} &\quad n = {n^i}'_{d_{(k)}} \\
{n^j}'_{d_{(k)}} &\quad n = n^j_{d_{(k)}} \\
n^j_{d_{(k)}} &\quad n = {n^j}'_{d_{(k)}} \\
n &\quad \text{otherwise} 
\end{cases} .
\end{align*}
For $d \in {\RR_i} \backslash {\RR_j}$ and all $k$:
\begin{align*}
\G^d(n) = 
\begin{cases}
{n^i}'_{d_{(k)}} &\quad n = n^i_{d_{(k)}} \\
n^i_{d_{(k)}} &\quad n = {n^i}'_{d_{(k)}} \\
n &\quad \text{otherwise} 
\end{cases} .
\end{align*}
For $d \in {\RR_j} \backslash {\RR_i}$ and all $k$:
\begin{align*}
\G^d(n) = 
\begin{cases}
{n^j}'_{d_{(k)}} &\quad n = n^j_{d_{(k)}} \\
n^j_{d_{(k)}} &\quad n = {n^j}'_{d_{(k)}} \\
n &\quad \text{otherwise} 
\end{cases} .
\end{align*}
And for $d \not\in {\RR_i} \cup {\RR_j}$:
\begin{align*}
\G^d(n) = n .
\end{align*}
That is, each $\G^d$ swaps the elements of $\n^i$ with the corresponding elements of ${\n^i}'$, and the elements of $\n^j$ with those of ${\n^j}'$, so long as the relevant indices are present. For the case where $d \in {\RR_i} \cap {\RR_j}$, we need to make sure that this is a valid permutation. Specifically, we need to make sure that it is injective (it is clearly surjective from $[N_d]$ to $[N_d]$). But it is indeed injective, since we have $n^i_{d_{(k)}} = n^j_{d_{(k)}}$ iff ${n^i}'_{d_{(k)}} = {n^j}'_{d_{(k)}}$ for all $d \in {\RR_i} \cap {\RR_j}$ and all $k$, since $\set{P}(\n^{i,j}_d) = \set{P}(\n^{i',j'}_d)$.  

Now, for all $i$, let $\K^i = \bigotimes_{d \in \RR_i} \G^d$ be the $i^\text{th}$ diagonal block of $\G^\XX$. By definition of the $\G^d$, for all $d \in \RR_i$ we have $\G^d_{n^i_d, {n^i}'_d} = 1$, and thus by the observation at \cref{eqn:observe} we have $\K^i_{\n^i, {\n^i}'} = 1$. And so
\begin{align*}
\big( \G\prm{\XX} \W \big)_{(i, \n^i), (j, {\n^j}')} &= \sum_{k}  \sum_{\n^k}  \G\prm{\XX}_{(i, \n^i), (k, \n^k)}  \W_{(k, \n^k), (j, {\n^j}')} \\
&= \sum_{{\n^i}''} \K^i_{\n^i, {\n^i}''}  \W^{i,j}_{{\n^i}'', {\n^j}'} \\
&= \W^{i,j}_{{\n^i}', {\n^j}'} .
\end{align*}
Similarly, $\K^j_{\n^j, {\n^j}'} = 1$, so 
\begin{align*}
\big( \W \G\prm{\XX} \big)_{(i, \n^i), (j, {\n^j}')} &= \sum_{k}  \sum_{\n^k} \W_{(i, \n^i), (k, \n^k)} \G\prm{\XX}_{(k, \n^k), (j, {\n^j}')} \\
&= \sum_{{\n^j}''} \W^{i,j}_{\n^i, {\n^j}''} \K^j_{{\n^j}'', {\n^j}'} \\
&= \W^{i,j}_{\n^i, \n^j} .
\end{align*}
Finally, by \cref{eqn:param_unequal}, $\G\prm{\XX} \W \not= \W \G\prm{\XX}$, completing the proof. 

\end{proof}

\section{Details of Experiments}\label{sec:experiments_details}

\subsection{Synthetic Experiment}

\subsubsection{Data Generation}

The synthetic data we constructed used $200$ instances for each of the three entities, and the latent dimension of $h=2$ for ground-truth entity embeddings. We ensure that each row and column has at least 5 observations. We repeat 5 runs of 10\%, 50\% and 90\% train observation instances to reconstruct the rest of data as test observations, and report the average and standard deviation.

\subsubsection{Baselines}

For training, we create coupled tensors from the observed data and produce embeddings for each entity through C-CPF and C-TKF, which use the following objective function
\begin{equation}
    L = ||\mathbf{X}^{\{1,2\}} - \widehat{\mathbf{X}}^{\{1,2\}}|| + ||\mathbf{X}^{\{1,3\}} - \widehat{\mathbf{X}}^{\{1,3\}}|| + ||\mathbf{X}^{\{2,3\}} - \widehat{\mathbf{X}}^{\{2,3\}}||
\end{equation}
where for C-CPF the reconstruction is given by
\begin{equation}
    \widehat{\X}^{\{d_1, d_2\}} := \Z^{d_1} {\Z^{d_2}}^{\mathsf{T}} 
\end{equation}
and for C-TKF the generative process is
\begin{equation}
    \widehat{\X}^{\{d_1, d_2\}} := \Z^{d_1} {\mathbf{C}^{d_1d_2}} {\Z^{d_2}}^{\mathsf{T}}.
\end{equation}
Here, $||\cdot||$ is the Frobenius norm. We set latent factor dimension to 10 for CMTF baselines.
At test time, we use the decomposed entity embeddings $\Z$ (and core embeddings $\mathbf{C}$) to attempt to reconstruct test observations, reporting RMSE loss only on these test observations. 

\subsubsection{EERN}
For training, we pass our observed data through the network, producing encodings for each entity. We attempt to reconstruct the original input from these encodings. At test time, we use the training data to produce encodings as before, but now use these encodings to attempt to reconstruct test observations as well, reporting RMSE loss only on these test observations. 
Following~\citep{hartford2018deep}, we use a \textit{factorized auto-encoding} architecture consisting of a stack of EERL followed by pooling that produces code matrices $\Z^{d} \in \Real^{N_d \times h'} \quad \forall d \in \{1,2,3\}$ for each entity, \kw{student, course} and \kw{professor}. The code is then fed to a decoding EERN to reconstruct the sparse $\vec{\XX}$. The encoder consists of 7 EERLs, each with 64 hidden units, each using batch normalization~\citep{ioffe2015batch} and channel-wise dropout. We then apply mean pooling to produce encodings. The latent encoding dimension is set to 10 (same as baseline), dropout rate set to 0.2, activation function set to Leaky ReLU~\citep{xu2015empirical}, optimizer set to Adam~\citep{kingma2014adam}, number of epoches set to 4,000, and learning rate set to 0.0001. We found that batch normalization dramatically sped up the training procedure. 

\subsection{Real-world Experiment}

\subsubsection{Soccer Data Generation}

We use the European Soccer database to create relations $\RR_1 = \{1,2\}$ (\kw{match-team}), $\RR_2 = \{1,3\}$ (\kw{match-player}), with cardinality of the entities $N_{d_1} = 25,629; N_{d_2} = 288; N_{d_3} = 10,739$ and feature dimension $N_f = 135$. Their corresponding coupled feature tensors are $\X^{\{1,2\}}\in \RR^{N_{d_1} \times N_{d_2} \times N_f}$ and $\X^{\{1,3\}}\in \RR^{N_{d_1} \times N_{d_3} \times N_f}$. The prediction target vector for each match's \textit{Home} minus \textit{Away} score difference is a score vector $\mathbf{d} = \RR^{N_{d_1}}$. The prediction target for whether a match is \textit{Home Win}, \textit{Away Win} or \textit{Draw} is an indicator vector $\mathbf{r} = \{0,1,2\}^{N_{d_1}}$. We use 80\% of data for training, 10\% for validation and the rest 10\% for test.

\subsubsection{Hockey Data Generation}

We use the NHL Hockey database to create relations $\RR_1 = \{1,2\}$ (\kw{match-team}), $\RR_2 = \{1,3\}$ (\kw{match-player}), with cardinality of the entities $N_{d_1} = 11,434; N_{d_2} = 33; N_{d_3} = 2,212$ and feature dimension $N_f = 97$. Their corresponding coupled feature tensors are $\X^{\{1,2\}}\in \RR^{N_{d_1} \times N_{d_2} \times N_f}$ and $\X^{\{1,3\}}\in \RR^{N_{d_1} \times N_{d_3} \times N_f}$. The prediction target vector for each match's \textit{Home} minus \textit{Away} score difference is a score vector $\mathbf{d} = \RR^{N_{d_1}}$. The prediction target for whether a match is \textit{Home Win} or \textit{Away Win} is an indicator vector $\mathbf{r} = \{0,1\}^{N_{d_1}}$. We use 80\% of data for training, 10\% for validation and the rest 10\% for test.

\subsubsection{Baselines}

For training, we create coupled tensors from the observed data and produce embeddings for each entity through C-CPF and C-TKF, through the following objective function, where $||\cdot||$ denotes Frobenius norm, $[\mathbf{X}_1 \dots \mathbf{X}_n]$ denotes the Kruskal-form tensor created by factor matrices $\mathbf{X}_1 \dots \mathbf{X}_n$, and $\times_n$ denotes tensor mode-n product. We set latent factor dimension to 10 for CMTF baselines.
\begin{align}
    L = ||\mathbf{X}^{\{1,2\}} - \widehat{\mathbf{X}}^{\{1,2\}}|| + ||\mathbf{X}^{\{1,3\}} - \widehat{\mathbf{X}}^{\{1,3\}}|| + ||\mathbf{d} - \widehat{\mathbf{d}}||
\end{align}

where for C-CPF
\begin{align}
    \widehat{\mathbf{X}}^{\{1,2\}} := [\mathbf{Z}^1, \mathbf{Z}^2, \mathbf{Z}^{f_{12}}]\\
    \widehat{\mathbf{X}}^{\{1,3\}} := [\mathbf{Z}^1, \mathbf{Z}^3, \mathbf{Z}^{f_{13}}]  \\
    \widehat{\mathbf{d}} := [\mathbf{Z}^1, \mathbf{Z}^d] = \mathbf{Z}^1 {\mathbf{Z}^d}^{\mathsf{T}}
\end{align}

and for C-TKF
\begin{align}
    \widehat{\mathbf{X}}^{\{1,2\}} := \mathbf{C}^{12} \times_1 \mathbf{Z}^1 \times_2 \mathbf{Z}^2 \times_3 \mathbf{Z}^{f_{12}}\\
    \widehat{\mathbf{X}}^{\{1,3\}} := \mathbf{C}^{13} \times_1 \mathbf{Z}^1 \times_2 \mathbf{Z}^3 \times_3 \mathbf{Z}^{f_{13}}\\
    \widehat{\mathbf{d}} := \mathbf{C}^{d} \times_1 \mathbf{Z}^1 \times_2 \mathbf{Z}^d = \mathbf{Z}^1 \mathbf{C}^{d} {\mathbf{Z}^d}^{\mathsf{T}}
\end{align}

At test time we use the decomposed entity embeddings $\mathbf{Z}$ (and core embeddings $\mathbf{C}$) to attempt to reconstruct test observations, reporting RMSE loss only on these test observations.

\subsubsection{EERN}
For training, we pass our observed data through the network, producing encodings for each entity. Then we predict $\mathbf{d}$ or $\mathbf{r}$ from \kw{match} encodings respectively through minimizing RMSE loss and cross entropy loss. At test time, we use the training data to produce encodings as before, but now use these encodings to predict test target, reporting RMSE loss and accuracy only on these test observations. 
We use a \textit{factorized auto-encoding} architecture consisting of a stack of EERL followed by pooling that produces code matrices $\Z^{d} \in \Real^{N_d \times h'} \quad \forall d \in \{1,2,3\}$ for each entity, \kw{match, team} and \kw{player}. The code is then fed to a decoding EERN to predict $\mathbf{d}$ or $\mathbf{r}$ through their respective losses. We use the same architecture as the synthetic experiments but set hidden units to 40, activation function to ReLU, dropout rate to 0.5, epochs to 10,000 for soccer experiment and 2,000 for hockey experiment, intermediate pooling to mean, final pooling to sum, and learning rate to 0.001.

\section{Ablation Study with Synthetic Data}\label{sec:ablation}
We qualitatively evaluate EERNs using synthetically-generated data. In this way we can examine both the quality of the latent embeddings produced, and the ability of EERNs to use information from across the database to make predictions for a particular table. We do these in both the transductive and inductive settings. 

\subsection{Data Generation}
The data is constructed using $200$ instances for each of the three entities (\kw{Student, Course, Professor}). For each instance we sample $h = 2$ random values representing its ground-truth embedding. The entries of the data tables are then produced as the inner product between the corresponding row and column embeddings.

\subsection{Embedding Visualization}
\begin{figure}[t]\centering
\hbox{
\subfigure[{Transductive Truth}]{\includegraphics[width=0.24\columnwidth]{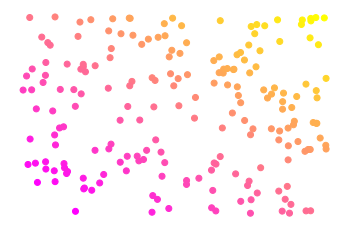}}
\subfigure[{Transductive Pred.}]{\includegraphics[width=0.24\columnwidth]{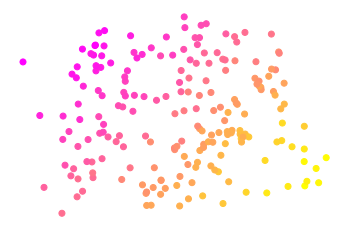}}
\subfigure[Inductive Truth]{\includegraphics[width=0.24\columnwidth]{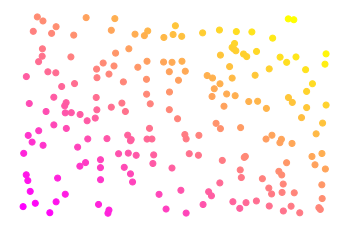}}
\subfigure[Inductive Pred.]{\includegraphics[width=0.24\columnwidth]{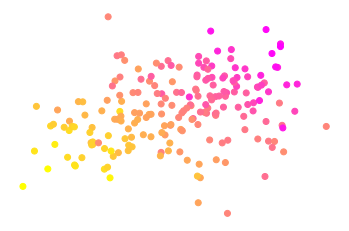}}
}
\caption{\footnotesize{ Ground truth versus predicted embedding for \kw{Student} instances in the transductive (a), (b) and inductive (c), (d) settings. The x-y location of each point encodes the prediction $\widehat{\Z^d}$, while the coloring for both transductive and inductive setting is consistent across ground truth and prediction. In the inductive setting, training and test databases contain completely distinct instances (i.e., completely different students).
}}\label{fig:embedding}
\end{figure}

To examine the quality of the learned embeddings we set the input and predicted embedding dimensions to be the same ($h = h'=2$), and train EERN to reconstruct unobserved entries for tables generated with one set of latent embeddings, and thus visualize predicted embedding generated by the trained model for both tables created by original input embedding (transductive) and unseen new input embedding (inductive). We can then visualize the relationship between the input and predicted embeddings in both transductive and inductive setting. \cref{fig:embedding}(a) and (b) show this relationship and suggest that the learned embeddings agree with the ground truth in the transductive setting: the same coloring was applied for input and predicted embedding and points with similar colors share vicinity with each other in both input and predicted embedding. We see a similar relationship in the inductive setting in \cref{fig:embedding}(c) and (d) where the model has not seen the \kw{student} instances before but the inductive input and predicted embedding using the same coloring show points with similar color in the same vicinity. This suggests that the model is still able to produce a reasonable embedding in the inductive setting. Note that in the best case, the inductive input versus predicted embeddings can agree up to a diffeomorphism.

\subsection{Missing Record Prediction}

\begin{figure}[t]
\centering
\subfigure[Transductive]{\includegraphics[width=.3\columnwidth]{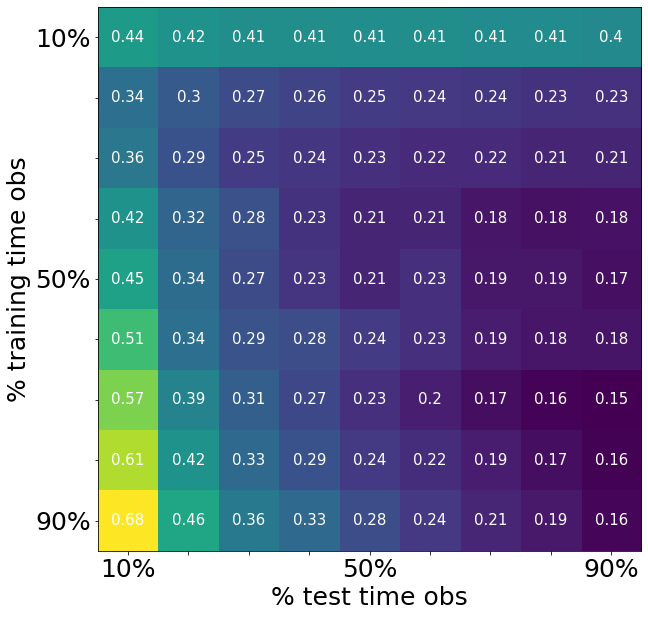}}
\subfigure[Inductive]{\includegraphics[width=0.3\columnwidth]{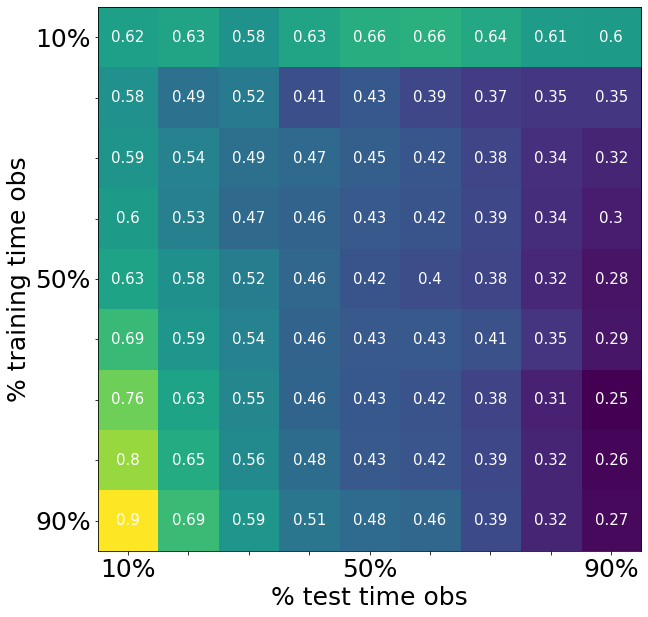}}
\caption{\footnotesize{
 Average root mean squared error in predicting missing records in \kw{student-course} as a function of sparsity level of the whole database $\XX$ during training (x-axis) and test (y-axis), in the transductive setting (c) and the inductive setting (d). In (d) the model is tested on a new database with \kw{students, courses} and \kw{professors} unseen during training time. The baseline is predicting the mean value of training observations. At test time, the observed entries are used to predict the values of the fixed, held-out test set.}}\label{fig:sparsity}
\end{figure}

Once we finish training the model, we can apply it to another instantiation --- that is a dataset with completely different \kw{students, courses} and \kw{professors}.
This is possible because the unique values in $\W$ do not grow with the number of instances of each entity in our database -- \ie we can resize $W$ by repeating its pattern to fit the dimensionality of our data. In practice, since we use pooling-broadcasting operations, this resizing is implicit. \cref{fig:sparsity}(b) shows the results for missing record prediction experiment in the inductive setting. 

Importantly, by incorporating enough new data at test time, the model can achieve the same level of performance in inductive setting as in the transductive setting. This can have interesting real-world applications, as it enables transfer learning across databases and allows for predictive analysis without training for new entities in a database as they become available.

Next, we set out to predict missing records in the \kw{student-course} table using observed data from the whole database. For this, the factorized auto-encoding architecture is trained to only minimize the reconstruction error for ``observed'' entries in the \kw{student-course} tensor. We set the latent encoding size to $h'=10$. We first set aside a special 10\% of the data that is only ever used for testing. Our objective will be to predict these missing entries, and we will vary 1) the proportion of data used to train the model, and 2) the proportion of data used to make predictions at test time. At test time, the data given to the model will include new data that was unobserved during training, allowing us to gauge the model's ability to incorporate new information without retraining. 

We train nine models using respectively 10\% to 90\% of the data as observed. The observed data is passed through the network, producing latent encodings for each entity instance. We then attempt to reconstruct the original input from these encodings. For consistency of comparison, the subsets of observed entries are chosen so as to be nested. At test time, we produce latent encodings as before, but now use them to reconstruct test observations, reporting RMSE only on these. 

\cref{fig:sparsity}(a) visualizes the prediction error of the model, averaged over 5 runs. The $y$-axis shows the proportion of data the model was trained on, and the $x$-axis shows the proportion provided at test time. Naturally, seeing more data during training helps improve predictions, but we particularly note that predictions can also be improved by incorporating new data at test time, without expensive re-training. 

\begin{wrapfigure}[10]{r}{.4\textwidth}\centering
\includegraphics[width=0.4\textwidth]{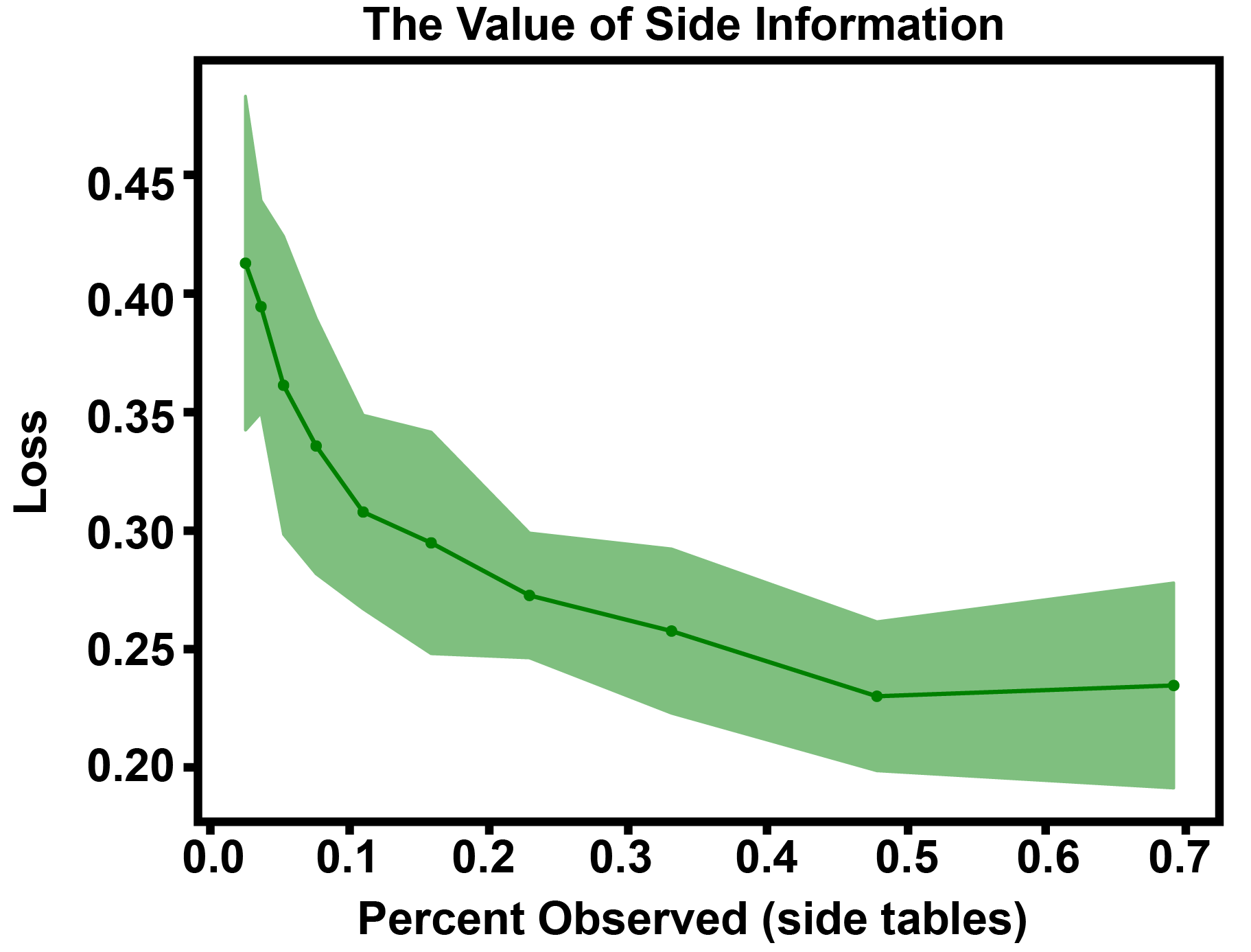}
\caption{Side information boost performance}\label{fig:side_info}
\end{wrapfigure}
\subsection{The Value of Side Information}
Do we gain anything by using the ``entire'' database for predicting missing entries of a particular table, compared to simply using one target tensor $\X^{\{1,2\}}$ ($\kw{student-course}$ table) for both training and testing? 
To answer this question, we fix the sparsity level of the \kw{student-course} table at 10\%, and train models with increasing levels of sparsity for other tensors $\X^{\{1,3\}}$ and $\X^{\{2,3\}}$ in the range $[0.025, 0.7]$. \cref{fig:side_info} shows that the side information in the form of $\kw{student-professor}$ and $\kw{course-professor}$ tables can significantly improve the prediction of missing records in the $\kw{student-course}$ table.

}{}

\end{document}